\documentclass{article}

 \usepackage[preprint]{neurips_2026}

\usepackage[utf8]{inputenc} 
\usepackage[T1]{fontenc}    
\usepackage{hyperref}       
\usepackage{url}            
\usepackage{booktabs}       
\usepackage{amsfonts}       
\usepackage{nicefrac}       
\usepackage{microtype}      
\usepackage{xcolor}         

\usepackage{amsmath,amssymb,amsthm}
\newtheorem{proposition}{Proposition}
\newtheorem{remark}{Remark}
\newtheorem{corollary}{Corollary}

\usepackage{algorithm}
\usepackage{algpseudocode}
\usepackage{xspace}
\usepackage{amsmath}
\usepackage{multirow}
\usepackage{caption}
\usepackage{pifont}
\usepackage{graphicx}
\usepackage{tabularx}
\usepackage{wrapfig,lipsum}
\usepackage{subfig}
\usepackage[inline]{enumitem}
\usepackage{wrapfig}

\usepackage[nameinlink]{cleveref}
\crefname{paragraph}{Sec.}{Secs.}
\Crefname{paragraph}{Sec.}{Secs.}
\crefname{section}{Sec.}{Secs.}
\crefname{table}{Tab.}{Tabs.}
\crefname{equation}{Eq.}{Eqs.}
\crefname{figure}{Fig.}{Figs.}

\newcommand{\TODO}[1]{\textbf{\color{red}[TODO: #1]}}

\newcommand{\methodAbbr}{OmniMem}

\newcommand{\onedot}{\ifx\@let@token.\else.\null\fi\xspace}

\newcommand{\eg}[0]{\emph{e.g}\onedot}

\newcommand{\para}[1]{\noindent \textbf{#1.}}

\newcommand{\lightmidrule}{\addlinespace[1pt]}

\newcommand{\appcref}[1]{Appendix~\cref{#1}}

\usepackage{listings}
\usepackage{xcolor}
\title{\methodAbbr{}: Scalable and Adaptive Memory Retrieval for Long Video Generation}

\author{%
    Lin Zhao\textsuperscript{1,2}\thanks{Equal contribution.}
    \quad
    Yushu Wu\textsuperscript{1}\footnotemark[1]
    \quad
    Yifan Gong\textsuperscript{2}
    \quad
    Yanzhi Wang\textsuperscript{1}
    \quad
    Pu Zhao\textsuperscript{1}\thanks{Corresponding author.}
    \\
    \textsuperscript{1}Northeastern University\quad
    \textsuperscript{2}Adobe Research
    \\
    Project Page: \url{https://wuyushuwys.github.io/OmniMem}
}

\begin{document}

\maketitle

\begin{figure*}[h]
\begingroup
\setlength{\abovecaptionskip}{3pt}
\setlength{\belowcaptionskip}{0pt}
  \centering
  \includegraphics[width=0.97\linewidth]{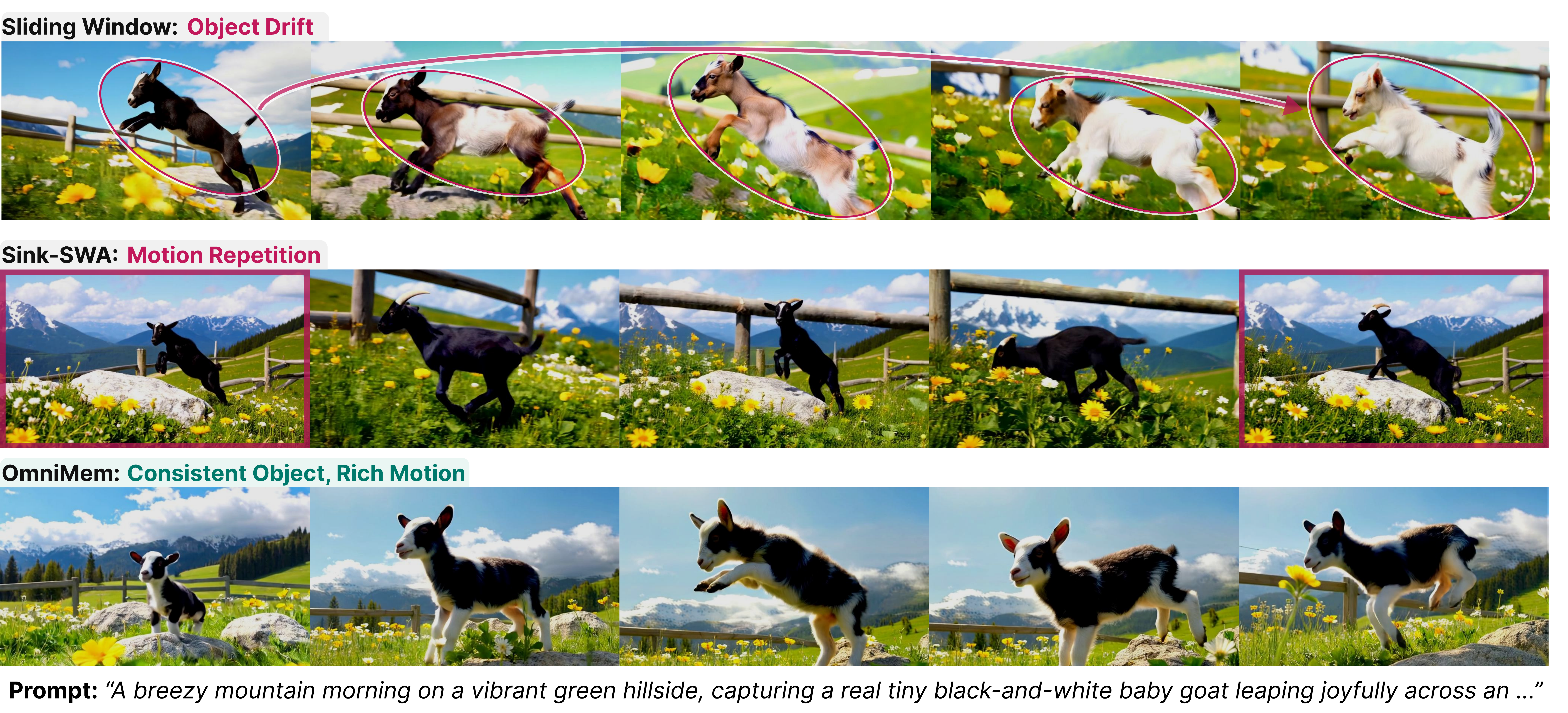}
\caption{
\textbf{\methodAbbr{} preserves object identity while maintaining rich motion in long video generation.}
SWA shows object drift, and Sink-SWA produces repetitive motion.
}
\label{fig:teaser}
\endgroup
\end{figure*}

\begin{abstract}
Autoregressive (AR) video generation extends videos by producing latent chunks sequentially, but scaling to long videos requires repeated access to a growing historical KV cache.
Existing methods reduce this cost by truncating the KV cache or compressing it into implicit memory, but both lose explicit access to query-relevant historical details.
We propose \methodAbbr{}, an explicit full-range memory retrieval framework that performs sparse KV retrieval over the historical cache.
To make this practical for chunk-based AR video generation, \methodAbbr{} addresses two issues: (i) local bias in sparse KV selection and (ii) Union Explosion in memory access.
Adaptive Window Exclusion removes local-window blocks from the selection candidates when sufficient long-range history is available, preserving the sparse budget for informative long-range retrieval.
Query-Shared KV Selection reduces cross-query diversity, while Per-Head Scattered KV Access avoids expanding head-specific selections into a large selected KV buffer.
This allows each attention head to retrieve non-contiguous KV blocks according to its own selection pattern.
Experiments on long-video generation show that \methodAbbr{} improves Dynamic Degree by 52.3\% and preserves strong consistency over strong baselines, while maintaining comparable memory usage.

\end{abstract}
\section{Introduction}

Autoregressive (AR) video generation with Diffusion Transformer (DiT) 
backbones~\cite{yang2024cogvideox,zheng2024opensora,lin2024opensoraplan,shen2026fastcar,shen2025draftattention,HunyuanVideo,polyak2024moviegen,zhao2026s2dit} 
has catalyzed significant progress in long video generation.
By generating video sequentially, AR models can go beyond the fixed 
temporal horizon of standard bidirectional 
models~\cite{wang2024loong,deng2024nova,jin2024pyramidal,yin2025slow,che2024gamegen,deng2024causalfusion,wu2025taming}.
This property is important for applications that require long, coherent visual sequences, such as interactive gaming, robotics, and world models \cite{lingbot-va2026,bruce2024genie,yin2025slow}.

Nevertheless, scaling AR video generation to long sequences introduces a fundamental computational challenge. 
As video length increases, the KV cache grows linearly with the number of generated chunks, increasing both memory consumption and the attention computation required at each step.
A simple solution is to discard distant KV states once the cache becomes too large.
However, this weakens long-range dependencies and causes the generated subject to gradually lose its identity, resulting in noticeable drift in color, shape, and trajectory across frames as illustrated in \cref{fig:teaser}.
Existing methods mainly address this drift issue in two ways.
The first retains a fixed set of anchor tokens or memory banks (\eg, sink tokens) to stabilize the generation \cite{yang2025longlive,liu2025rolling,ji2025memflow}. 
While these help preserve identity, we observe that they often lead to repetitive motion patterns in long rollouts, indicating a trade-off between consistency and motion dynamics (\cref{fig:teaser}).
%
The other introduces auxiliary modules (such as SSMs \cite{yu2025videossm} and Conv3D encoders \cite{zhang2025pretraining}) to compress the history into a compact state, but this may lose fine-grained details needed for long-horizon consistency.



These limitations call for a memory mechanism that can perform \textbf{full-range} sparse retrieval over the history, explicitly selecting and accessing query-relevant context on demand~\cite{yuan2025native}. 
Nevertheless, it is non-trivial to apply the sparse memory retrieval for chunk-based AR video generation models, complicated by two specific challenges: (i) \textbf{local bias}  and (ii) \textbf{union explosion}.   

\textbf{Local bias} is introduced by the attention scores to select a limited number of full-resolution KV blocks from the pooled historical cache. Since attention naturally favors nearby visual context, the attention scores for selection are dominated by nearby KV blocks, and therefore the selected blocks tend to concentrate heavily around the current chunk, which is unable to retrieve long-range historical memory. 
\textbf{Union explosion} caused by cross-query and cross-head selection divergence can undermine the memory benefit of sparse retrieval. 
Although each query retrieves only a small number of KV blocks, chunk-based AR video generation processes thousands of query tokens (\eg, 4--5K) simultaneously at each AR step, unlike LLM decoding, which typically generates one query token per step.
Different queries within the same chunk may select different historical regions due to the diverse visual content in the chunk.
Different attention heads may further prefer different historical regions because of their distinct attention patterns.
In this case, sparse retrieval reduces the attention computation for each query, but the union of distinct selected KV blocks across all queries and heads can still be large.
Although the method remains sparse in computation, it can still require a large KV memory footprint for the current AR step.
We refer to this memory-side blow-up as Union Explosion.

To address these issues, we propose \methodAbbr{}, an explicit memory retrieval framework for chunk-based AR video generation.
\methodAbbr{} addresses local bias with Adaptive Window Exclusion, which removes the KV blocks already covered by the sliding-window branch from the selection candidates, and thus preserves the sparse selection budget for informative long-range retrieval.
To address Union Explosion, \methodAbbr{} introduces Query-Shared KV Selection and Per-Head Scattered KV Access.
Query-Shared KV Selection reduces cross-query selection diversity by letting nearby query tokens share the same sparse KV selection.
Per-Head Scattered KV Access avoids materializing a large merged cache across attention heads as the video grows longer.
Instead, each attention head directly retrieves its own selected non-contiguous KV blocks from the historical cache.
Together, these components make explicit long-range retrieval practical for chunk-based AR video generation.
Extensive experiments demonstrate that \methodAbbr{} sets a new state of the art for long video generation, simultaneously improving motion dynamics and long-range consistency over prior designs, while incurring only 1.7\% additional memory over LongLive~\cite{yang2025longlive}.


Our main contributions are summarized as follows:
\begin{itemize}[leftmargin=1.5em, itemsep=1pt, topsep=1pt, parsep=1pt, partopsep=1pt]
    \item We analyze the query regime in chunk-based autoregressive video generation and identify two issues that limit explicit memory retrieval: local bias in block selection and Union Explosion caused by cross-query and cross-head divergence.
    \item We propose \methodAbbr,  combining Adaptive Window Exclusion, Query-Shared KV Selection, and Per-Head Scattered KV Access to enable explicit and scalable long-range memory retrieval.
    \item Our framework improves the Dynamic Degree score by 52.3\% over the strongest baseline on VBench-Long, while preserving visual consistency with only 1.7\% additional memory overhead.
\end{itemize}

\section{Related Work}

\para{Long Video Generation}
Recent video generation models increasingly adopt causal, AR, or streaming formulations to improve generation throughput by generating videos sequentially and reusing intermediate states across frames. Prior efforts~\cite{ai2025magi1autoregressivevideogeneration,chen2025skyreelsv2infinitelengthfilmgenerative} employ chunk-based AR generation and diffusion forcing for long-video synthesis, while distillation-based methods~\cite{yin2025slow,huang2025self} convert bidirectional models into few-step AR generators with distribution matching. To reduce error accumulation over long rollouts, LongLive~\cite{yang2025longlive} and Self-Forcing++~\cite{cui2025self} introduce rollout-aware training, Rolling Forcing~\cite{liu2025rolling} denoises within a rolling window using attention sinks as global anchors, and MMM~\cite{cai2026mmm} combines long-context flow matching with sliding-window distribution matching in a non-AR setting. These methods improve long-horizon consistency through advanced training and denoising designs, while \methodAbbr~explores a different scheme enabling explicit memory retrieval over the historical KV cache.


\para{KV Cache Compression and Memory Retrieval in Long Video Generation}
KV cache management is a key bottleneck in long-horizon video generation. In the LLM domain, strategies such as token eviction~\cite{zhang2023h2o,li2024snapkv}, quantization~\cite{hooper2024kvquant}, and streaming inference~\cite{xiao2023streamingllm} have been extensively studied and are now being adapted to autoregressive video generation, where the spatiotemporal cache grows even more aggressively. Recent approaches tackle this through low-bit KV-cache quantization~\cite{xi2026quant}, history compression via learned embeddings~\cite{zhang2025pretraining} or SSM-based global memory~\cite{yu2025videossm}, and cache sparsification, retaining only informative tokens across chunks~\cite{zhang2025blockvid} or activating relevant memory on demand~\cite{ji2025memflow}.  Further refinements exploit head-wise context redundancy~\cite{guo2026efficient} and salience-based policies that distill bidirectional knowledge for token importance estimation~\cite{yang2026pafukv}. 
Building on these efforts, \methodAbbr~introduces a learned explicit retrieval framework to manage the historical KV cache dynamically during chunk-based long-video generation.
\section{OmniMem: Explicit Memory Retrieval and Access}

We investigate  chunk-based Autoregressive (AR) video generation with a causal  DiT  backbone. Unlike token-wise AR generation in LLMs, DiT-based chunk generation processes thousands of  tokens simultaneously in each  chunk,   leading to high memory cost especially in long video generation. The preliminary for  AR  video generation and KV cache are detailed in \appcref{app:sec:preliminary}. 

\begin{figure*}[t]
  \centering
  \includegraphics[width=0.94\linewidth]{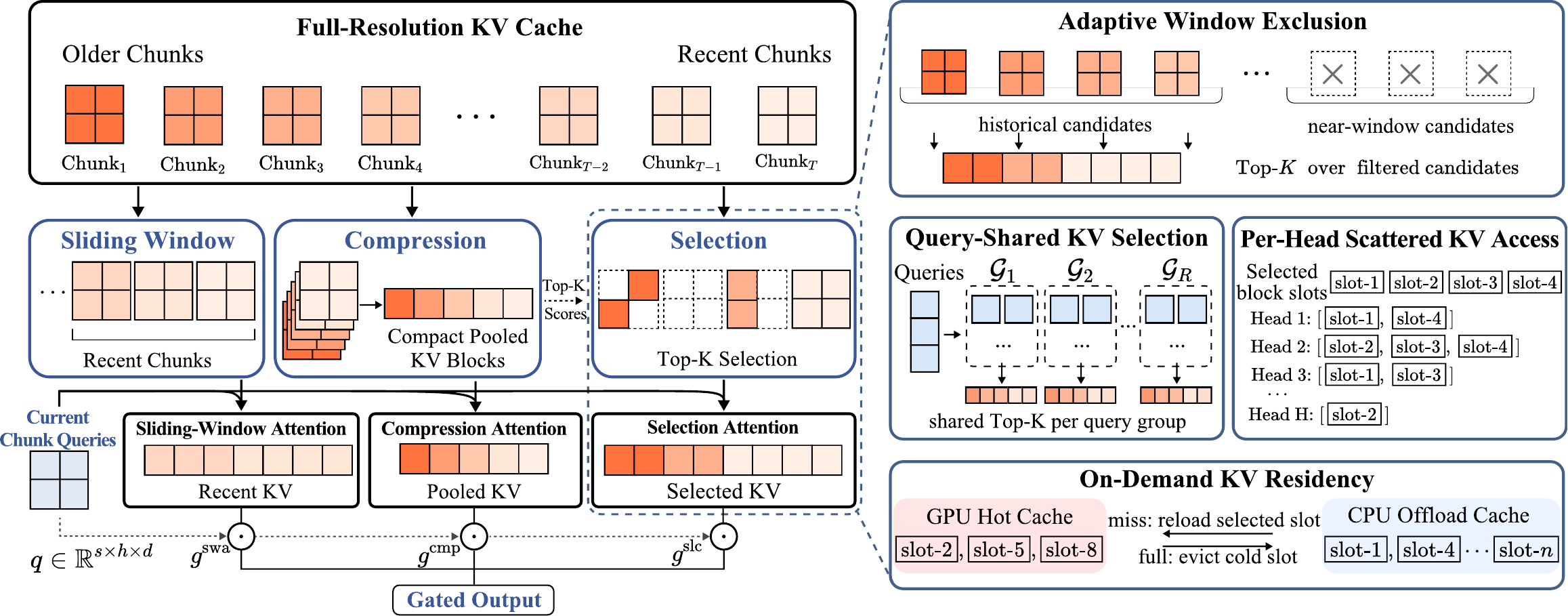}
\caption{
\textbf{Overview of \methodAbbr.}
Current-chunk queries attend to recent KV blocks, pooled historical KV blocks, and retrieved full-resolution KV blocks through sliding-window, compression, and selection attention, respectively.
The right panels summarize the key retrieval and access designs: filtering near-window candidates before Top-$K$ selection, sharing Top-$K$ selection within query groups, and accessing per-head block slots with on-demand GPU residency.
}
\label{fig:framework}
\vspace{-7pt}
\end{figure*}

\subsection{Framework}

\para{Problem Formulation}
Long-video generation with causal DiTs is bottlenecked by the growing historical KV cache. 
As shown in the visualization (\appcref{app:sec:attn_map}) of the attention maps from Self-Forcing~\cite{huang2025self}, many heads rely on scattered long-range historical positions, suggesting that the memory cannot be discarded.
%
%
This motivates us to treat long-range memory access as an explicit retrieval problem over historical context.
The goal is to preserve informative long-range dependencies while avoiding dense attention over the full history.
This formulation raises two questions: which historical blocks should be retrieved for the current chunk, and how should these selected blocks be efficiently organized  across thousands of queries with multiple attention heads.

\para{Retrieval over Historical KV Cache} 
Building on recent advances in sparse attention for LLMs, particularly Native Sparse Attention (NSA)~\cite{yuan2025native}, we formulate long-range memory access as explicit block-level retrieval over the historical KV cache:
\begin{equation}
\small
    o = \sum_{c \in \mathcal{C}} g^c \cdot \mathrm{Attn}(Q, K^c, V^c), \quad \mathcal{C} = \{\mathrm{CMP}, \mathrm{SLC}, \mathrm{SWA}\},
    \label{eq:nsa}
\end{equation}
where compression attention (CMP), selection attention (SLC), and sliding-window attention (SWA) attend to the pooled historical KV cache, the retrieved full-resolution historical KV blocks, and the recent KV cache, respectively.
Their outputs are fused by coefficients $g^c \in [0,1]$.
Selection attention uses compression-attention scores to identify the top-$K$ relevant historical KV blocks for each query.
Unlike NSA~\cite{yuan2025native}, our compression pools over 2D spatial neighborhoods to align with the visual semantics, implemented via a one-time token reorder (details and proof in \appcref{sec:reorder}).

\para{Challenges}
In practice, we find that it is non-trivial to apply sparse memory retrieval to chunk-based AR video generation models, owing to two specific challenges, including local bias and union explosion, which we analyze in the following sections. 
The two challenges correspond to two fundamental questions, respectively: which historical blocks should be retrieved, and how should these selected blocks be efficiently organized.
To answer these fundamental questions, \methodAbbr~addresses  local bias with Adaptive Window Exclusion~(\cref{sec:window_exclusive}), and union explosion through Query-Shared KV Selection and Per-Head Scattered KV Access~(\cref{sec:per-head}).

\subsection{Addressing Local Bias with Adaptive Window Exclusion}

We first examine which blocks are actually selected by the top-$K$ mechanism in \cref{eq:nsa}.
As shown in \cref{fig:local-bias}, the selected blocks are heavily concentrated near the current chunk, leaving most of the long-range history unselected.
This local bias makes selection attention redundant with the sliding window and weakens the purpose of explicit retrieval of historical memory.
We analyze its cause below and present an effective solution for selecting long-range blocks and jumping out of local bias.

\begingroup
\setlength{\textfloatsep}{2pt}
\begin{figure}[t]
    \captionsetup{skip=3pt}
    \centering
    \subfloat[Local bias in Top-$K$ selection]{
        \includegraphics[width=0.39\linewidth]{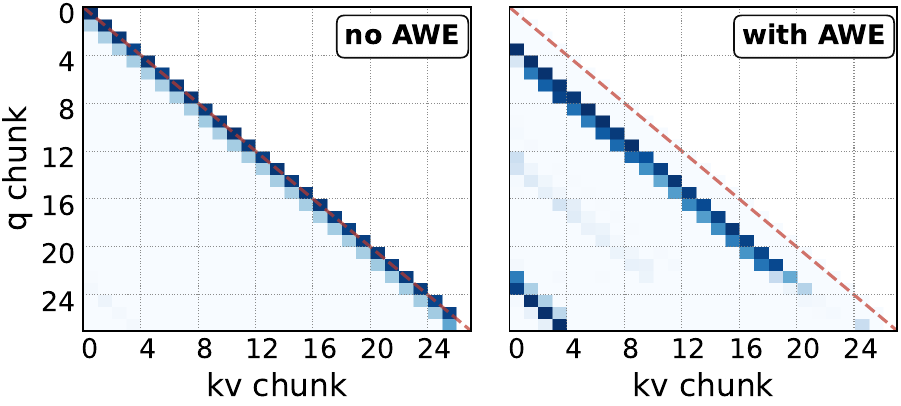}
        \label{fig:local-bias}
    }
    \hfill
    \subfloat[Cross-Query Divergence]{
        \includegraphics[width=0.271\linewidth]{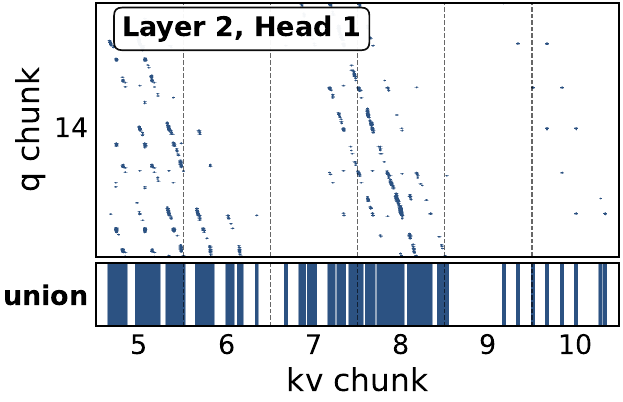}
        \label{fig:cross_query}
    }
    \hfill
    \subfloat[Cross-Head Divergence]{
        \includegraphics[width=0.271\linewidth]{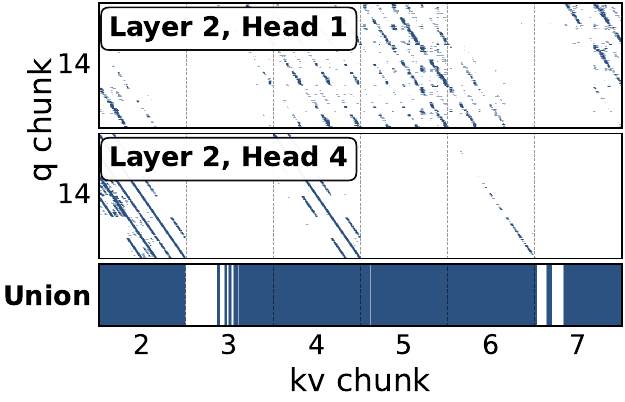}
        \label{fig:cross_head}
    }
    \caption{
    \textbf{Local bias and Union Explosion in selection attention.}
     (a) Top-K selection focuses near the current chunk without AWE, and shifts to long-range blocks with AWE. 
    (b) Different query chunks in one head select different blocks. 
    (c) Different heads also select different regions. 
    (b) and (c) together cause Union Explosion.
    Note that each chunk contains a number of tokens (\eg, 4-5$K$).
    }
    \label{fig:nsa_issues}
    \vspace{-1.em}
\end{figure}
\endgroup


\para{Local Bias in Top-$K$ Selection}
Without an explicit candidate mask, the compressed attention scores driving selection attention are dominated by nearby KV blocks. 
We attribute  local bias to three compounding factors. (i) \textit{Strong spatiotemporal correlation among nearby video tokens}. While RoPE imposes no explicit locality constraint, its relative-position parameterization can interact with this inherent visual continuity, systematically inflating attention scores between proximate tokens.
(ii) \textit{Amplification under AR video DiTs}. The bias is further exacerbated in AR DiTs, where the current chunk is denoised from a noisy latent while historical chunks supply clean, fully generated visual context. The asymmetry in signal quality between the noisy query and its immediate (clean) neighbors reinforces the tendency of attention to anchor locally.
(iii) \textit{Granularity mismatch between KV blocks and video chunks}. The KV block size is typically far smaller than a single video chunk. 
Let $S$ denote the number of tokens per chunk, $B$ the tokens per KV block, and $N_w$ the number of chunks covered by the sliding window. The local window then spans approximately $\lceil N_w S / B \rceil$ candidate blocks. Absent any mechanism to exclude these blocks, a top-$K$ selector must satisfy $K > \lceil N_w S / B \rceil$ to guarantee, in the worst case, the retrieval of even a single block outside the local window. For video chunks comprising thousands of tokens, this lower bound far exceeds practical sparse budgets, causing selection attention to collapse onto the sliding-window region rather than surface genuinely long-range memory\footnote{
For example, with $S=4680$, $N_w=3$, and $B=60$, the local window spans $\lceil 3 \times 4680 / 60 \rceil = 234$ KV blocks, far above the top-$K$ budget in practice.
}.

\para{Adaptive Window Exclusion for Selection Attention}
\label{sec:window_exclusive}
To mitigate  local bias in top-$K$ selection, we narrow the candidate set for selection attention to historical KV blocks outside the sliding window.
Since recent chunks are already accessed by SWA, selecting them again wastes the sparse budget of selection attention.
By removing the blocks covered by SWA from the selection candidates, selection attention is encouraged to retrieve complementary long-range memory outside the local window.

Let $\mathcal{H}^{(n)}$ denote all historical KV blocks at the $n^{th}$ AR step, and  $\mathcal{W}^{(n)} \subset \mathcal{H}^{(n)}$ denote the blocks covered by SWA.
Since selection attention retrieves top-$K$ blocks, Adaptive Window Exclusion removes $\mathcal{W}^{(n)}$ only when at least $K$ blocks remain outside the window, and top-$K$ selection is performed after Adaptive Window Exclusion:
\begin{equation}
\small
    \Omega_{\mathrm{SLC}}^{(n)}
    =
    \begin{cases}
    \mathcal{H}^{(n)} \setminus \mathcal{W}^{(n)}, 
    & \text{if } |\mathcal{H}^{(n)} \setminus \mathcal{W}^{(n)}| \ge K, \\
    \mathcal{H}^{(n)}, 
    & \text{otherwise}.
    \end{cases}
    \ \ \ \ \ \ \ \
        \mathcal{I}_{h,q}^{(n)}
    =
    \operatorname{TopK}_{b \in \Omega_{\mathrm{SLC}}^{(n)}}
    s_{h,q,b}.
    \label{eq:per_q_topk}
\end{equation}
For each query token $q$ and attention head $h$, selection attention uses compression-attention score $s_{h,q,b}$ for  historical block $b$ to select block indices.
The selected indices are used to fetch the corresponding full-resolution KV blocks.

Applying exclusion unconditionally can destabilize selection attention when the historical cache is short. If few or no valid blocks exist outside the recent window, the selection output may shift abruptly,  
which can perturb the fused representation of all attention branches and impede training stability.
To mitigate this, we make exclusion \textit{conditional on the available history}. Before a sufficient number of outside-window blocks have accumulated, selection attention operates over the full historical KV cache. Once this threshold is reached,  blocks already covered by SWA are excluded. 

\subsection{Mitigating Union Explosion with Query-Shared KV Selection and Per-Head  KV Access}
\label{sec:per-head}

While Adaptive Window Exclusion governs \emph{which} explicit memory blocks are selected, the efficiency of long-video retrieval equally depends on \emph{where} these blocks reside. We now turn to the management and access of selected blocks.
Under a memory budget,  we  maintain only a bounded set of explicit memory blocks on the GPU and offload less recently used historical blocks to CPU memory, as illustrated in \cref{fig:framework}. Under this design, the selected-block union becomes the critical quantity governing efficiency: prior to computing selection attention, all blocks chosen for the current chunk must be resident on the GPU. A larger union not only increases the volume of KV entries 
on GPU, but also inflates the temporary buffer required to materialize the selected KV, in a naïve implementation.

In LLM decoding, each step generates a single query token, so top-$K$ selection produces a small and temporally stable set of blocks. Chunk-based AR video generation departs from this regime in a fundamental way: thousands of query tokens are processed in parallel, and distinct queries or attention heads may attend to disparate historical regions. Consequently, the union of selected blocks can grow rapidly within a single chunk—a phenomenon we term \textbf{Union Explosion}. We analyze this growth along two complementary axes: \begin{enumerate*}[label=(\roman*)]
    \item cross-query divergence across queries, and
    \item cross-head divergence across attention heads
\end{enumerate*}.

\para{Cross-Query Divergence}
LLM decoding generates one query per step, so the top-$K$ selection naturally yields a small set of $K$ historical KV blocks.
AR video generation jointly produces a sequence of latent tokens within each chunk.
Different queries in the sequence carry different preferences and may select different historical KV blocks, so the chunk-level set of selected blocks is the union of
selections across all queries~(\cref{fig:cross_query}), which can grow up to $S \cdot K$ in the worst case
\footnote{
As a simple illustration, consider $S=4$ query tokens, each selecting one block from a history of 4 blocks.
If the four queries select distinct blocks, the chunk-level union covers the full history despite each per-query budget is one.
}.

\para{Cross-Head Divergence}
Popular video DiTs~\cite{LTX-video,wan2025wanopenadvancedlargescale,HunyuanVideo} use multi-head attention~(MHA), where each query head has its own KV head.
This one-to-one structure makes both the CMP scores and the retrieved KV blocks head-specific.
Empirically, as shown in ~\cref{fig:cross_head}, different heads exhibit distinct selection patterns in chunk-based AR video generation.
Some heads retrieve long-range context, while others focus on recent chunks.
This motivates head-specific retrieval rather than relying only on fixed global memory, such as sink tokens \cite{yang2025longlive,liu2025rolling}.

To effectively address the above challenges, we propose Query-Shared KV Selection for cross-query divergence, and Per-Head Scattered KV Access for cross-head divergence, as detailed below.

\para{Query-Shared KV Selection}
Cross-query divergence manifests at the query-token level. 
The spatially and temporally adjacent query tokens exhibit strong local correlation, tending to  retrieve overlapping sets of historical KV blocks. 
To effectively mitigate this, we propose Query-Shared KV Selection, which  exploits this local correlation explicitly by assigning a single shared 
KV-block selection to each group of adjacent query tokens, with 
independent selection across various groups. 

Formally, the $S$ query tokens are partitioned into $R=\lceil S/G_q\rceil$ adjacent query groups, where $G_q$ denotes the number of query tokens per group. All query tokens within the same group share a unified top-$K$ block selection, effectively constraining intra-group selection diversity without sacrificing the representational flexibility of inter-group retrieval. Thus,   the per-query selection in \cref{eq:per_q_topk} becomes
\begin{equation}
\small
    \bar{s}_{h,r,b}
    =
    \operatorname{AvgPool}_{q\in\mathcal{G}_r}
    s_{h,q,b},
    \quad
    \mathcal{I}_{h,r}^{(n)}
    =
    \operatorname{TopK}_{b \in \Omega_{\mathrm{SLC}}^{(n)}}
    \bar{s}_{h,r,b},
\end{equation}
where $\mathcal{G}_r$ denotes the set of token indices belonging to group  $r$. This reduces the worst-case selected block union along the query dimension from $S\cdot K$ to $R\cdot K$, while preserving diversity across query groups.
It also makes the selected KV access more regular, allowing the selection-attention kernel to use larger dense tiles and improve GPU Tensor Core utilization (details in \appcref{sec:supp:impl}).

\para{Per-Head Scattered KV Access}
As shown in~\cref{fig:cross_head}, different heads often prefer different historical regions, so forcing all heads to share the same selected blocks under a fixed sparse budget may reduce the expressiveness.
Therefore, we perform selection independently for each head.

The challenge is how to represent these head-specific selections efficiently.
Under the offload-and-reload setting, selected full-resolution KV blocks must be resident on GPU before selection attention is computed.
A straightforward implementation gathers these blocks into a temporary selected-KV buffer for the attention kernel.
Specifically, following the query sharing above, let $\mathcal{I}_{h,r}^{(n)}$ denote the top-$K$ block indices selected by query group $r$ for head $h$.
The blocks accessed by head $h$ are
\begin{equation}
\small
    \mathcal{U}_{h}^{(n)}
    =
    \bigcup_{r=1}^{R}
    \mathcal{I}_{h,r}^{(n)},
    \qquad
    R=\left\lceil \frac{S}{G_q} \right\rceil .
\end{equation}
The cross-head union determines which physical KV blocks must be resident on GPU or reloaded from CPU memory.
The bottleneck here is not the attention computation itself, 
but rather the selected-KV representation required by the attention kernel.
Materializing this union uniformly across all heads, however, introduces a large number of unused KV slots, since each head in practice attends to only a subset of the union.
If implemented by materializing a head-aligned selected-KV buffer, the footprint scales as 
$   M_{\mathrm{merge}}
    \propto
    N_h
    \left|
    \bigcup_{h=1}^{N_h}
    \mathcal{U}_{h}^{(n)}
    \right|.
$

\methodAbbr~instead uses Per-Head Scattered KV Access, as shown in~\cref{fig:framework}.
The full-resolution KV cache is organized as per-head, chunk-level tensors, and a pointer table maps the indices of selected blocks for each head to addresses within this native storage. 
The selection attention kernel consumes the per-head selected block indices, reading the corresponding non-contiguous KV blocks directly.
This design strictly bounds memory access to the exact blocks required by each head.
In the offload-and-reload setting, avoiding the materialization of a padded, cross-head dense tensor saves massive, redundant PCIe data transfers and prevents GPU memory exhaustion.
As a result, the selected-KV representation scales with the sum of per-head selections, rather than with the union across heads (see \appcref{sec:supp:per_head_access} for details):
\begin{equation}
\small
    M_{\mathrm{scatter}}
    \propto
    \sum_{h=1}^{N_h}
    \left|
    \mathcal{U}_{h}^{(n)}
    \right|.
    \label{eq:mem_scatter_kv}
\end{equation}
This avoids materializing a cross-head selected-KV buffer while preserving head-specific retrieval.




We highlight that \methodAbbr{} is specifically designed for chunk-based AR video generation, which significantly differs from sparse retrieval  \cite{yuan2025native} in LLMs, with more details above and in \appcref{app:sec:gqa}.

\section{Experiments}

\para{Implementation Details}
\methodAbbr~is built on Wan2.1-T2V-1.3B~\cite{wan2025wanopenadvancedlargescale}.
Following prior autoregressive video generation pipelines~\cite{yin2025slow,huang2025self}, we add our memory module and fine-tune the model with ODE initialization on VidProM prompts~\cite{wang2024vidprom}.
We then continue training with Self-Forcing~\cite{huang2025self} and long-video tuning, following LongLive~\cite{yang2025longlive}.
For the memory module, we use $15 \times 2$ pooling blocks, top-$4$ retrieval per query group, and a query-group size of 15.
Adaptive Window Exclusion removes the most recent 3 chunks from selection candidates once enough history is available.
The selection-attention kernel is implemented in Triton.
More details are provided in \appcref{sec:supp:impl}.


\subsection{Main Results}
\begin{figure*}[t]
  \centering
  \includegraphics[width=1.0\linewidth]{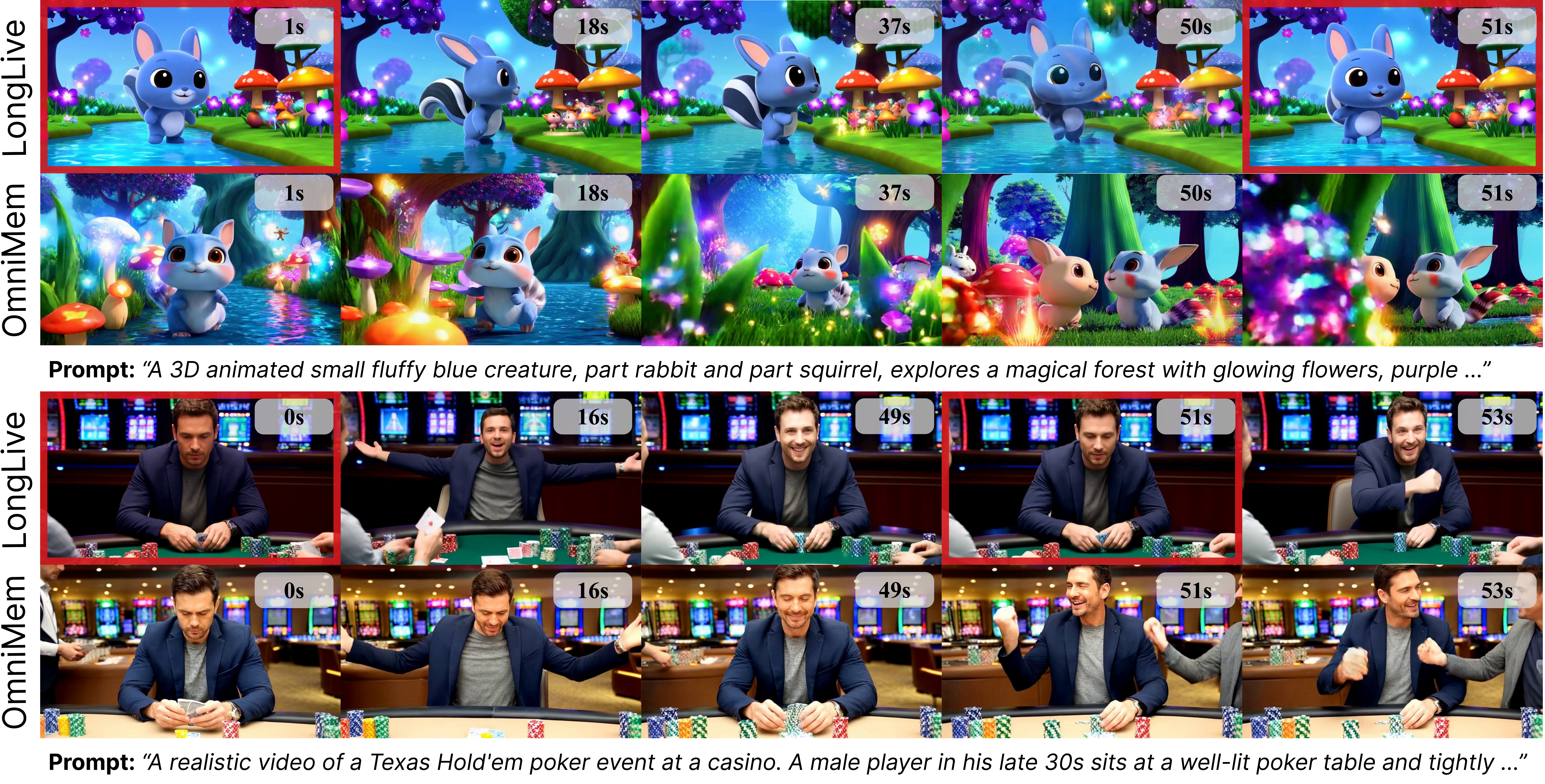}
\caption{
\textbf{Qualitative comparison on long-video generation.} Red boxes highlight repetitive frames where LongLive \cite{yang2025longlive} collapses back to early content. Full videos and additional results are provided in the supplementary material.
}
\label{fig:qualitative_results}
\vspace{-4pt}
\end{figure*}

\begin{table*}[t]
\centering
\caption{\textbf{Quantitative comparison on 60-second long video generation under VBench-Long~\cite{huang2025vbench++}.} 
\textbf{Bold} denotes the best. VRAM measured on the Nvidia H100.}
\label{tab:long-vbench}
\resizebox{0.95\linewidth}{!}{%
\begin{tabular}{l ccccccc}
\toprule
\multirow{2}{*}{Model} & \multirow{2}{*}{VRAM~(GB)} & Total  & Quality & Semantic & Subject & Imaging & Dynamic \\
& & Score  & Score & Score & Consistency & Quality & Degree \\
\midrule
SkyReels-V2~\cite{chen2025skyreelsv2infinitelengthfilmgenerative}   &   28.7    & 79.70   & 82.89   & 66.97   & 94.99   & 60.01   & 53.19 \\
RollingForcing~\cite{liu2025rolling}   &   25.3    & 82.46   & 84.34   & 74.95   & 94.97   & \textbf{72.22}   & 47.64 \\
LongLive~\cite{yang2025longlive}    &   22.9    & 82.28   & 83.79   & 76.23   & 95.00   & 69.94   & 53.47 \\
MemFlow~\cite{ji2025memflow}      &    26.2     & 82.56   & 84.14   & 76.24   & 94.49   & 70.67   & 54.03 \\
\midrule
\textbf{\methodAbbr}             &    23.3      & \textbf{83.08} & \textbf{84.74} & \textbf{76.46} & \textbf{95.09}    & 71.49    & \textbf{82.29}    \\
\bottomrule
\end{tabular}%
}
\vspace{-2pt}
\end{table*}

\para{Long Video Generation}
We evaluate 60-second single-prompt video generation under the official VBench-Long~\cite{huang2025vbench++} setting to measure long-horizon quality, consistency, and motion dynamics.
%
We compare against the representative methods that target for solving long video problems as shown in~\cref{tab:long-vbench}. 
\methodAbbr{} achieves the best overall score and improves most long-video metrics, with a particularly large gain in Dynamic Degree.
Prior methods obtain Dynamic Degree scores below $55$, indicating that they tend to produce limited motion under long rollouts.
In contrast, \methodAbbr{} achieves $82.29$ of Dynamic Degree, indicating stronger motion dynamics under the same long-video setting.
In terms of memory, \methodAbbr{} uses only 1.7\% more VRAM than LongLive (sink+SWA) and remains more memory-efficient than other baselines, while preserving strong consistency.
The qualitative comparison in \cref{fig:qualitative_results} shows a similar trend.
Compared with LongLive~\cite{yang2025longlive}, \methodAbbr{} maintains more stable visual details and richer motion over time, while LongLive shows limited temporal variation in the highlighted regions.
Additional qualitative results are provided in \appcref{sec:supp:more_qualitative}.

\begin{table*}[t]
\centering
\caption{\textbf{Quantitative comparison on multi-prompt 60-second long video generation.} Imaging Quality (IQ), Subject Consistency (SC), and Background Consistency (BC) are computed under VBench-Long~\cite{huang2025vbench++}. CLIP scores are computed per $10$-second segment to assess prompt adherence.}
\label{tab:multi-prompt}
\setlength{\tabcolsep}{4pt}
\renewcommand{\arraystretch}{1.}
\resizebox{\linewidth}{!}{%
\begin{tabular}{l ccc cccccc}
\toprule
\multirow{2}{*}{Model} & Imaging & Subject & Background & \multicolumn{6}{c}{CLIP Score$\uparrow$} \\
\cline{5-10}
 & Quality & Consistency & Consistency & 0--10\,s & 10--20\,s & 20--30\,s & 30--40\,s & 40--50\,s & 50--60\,s \\
\midrule
LongLive~\cite{yang2025longlive}   & 71.94 & 97.09 & 96.08 & \textbf{27.51} & 25.85 & 25.67 & 25.26 & 25.15 & 24.78 \\
MemFlow~\cite{ji2025memflow}       & 71.44 & \textbf{97.78} & 96.51 & 27.35 & 25.93 & 25.79 & 25.69 & 25.11 & 24.67 \\
\midrule
\textbf{\methodAbbr}               & \textbf{73.85} & 97.72 & \textbf{96.55} & 27.43 & \textbf{26.07} & \textbf{26.03} & \textbf{25.89} & \textbf{25.72} & \textbf{24.99} \\
\bottomrule
\end{tabular}%
}
\end{table*}

\para{Long Video Generation with Prompt Switching}
We further evaluate \methodAbbr~ under a multi-prompt long video generation setting, where the input is a sequence of $6$ successive prompts, each spanning $10$ seconds, that together specify a $60$-second video.
Following LongLive~\cite{yang2025longlive}, we construct an evaluation set of $100$ multi-prompt scripts.
We then evaluate the methods that support prompt switching on this set, following VBench-Long~\cite{huang2025vbench++}.
To assess prompt adherence over time, we additionally compute CLIP scores between each generated 10-second segment and its corresponding prompt.
As shown in \cref{tab:multi-prompt}, \methodAbbr~achieves superior performance on both visual quality and consistency.
More importantly, \methodAbbr~maintains stable CLIP scores as the video length grows, with only a $2.44$ drop from $0$--$10$s to $50$--$60$s.
The results indicate that \methodAbbr~preserves quality, consistency, and video-text alignment throughout long generation.

Results on short video generation are provided in \appcref{app:sec:short_video}.

%

\subsection{Ablation Study}

We conduct ablations on a 20-second video setting to understand the contribution of each design choice in \methodAbbr{} by comparing Image Quality (IQ), Subject Consistency (SC), Background Consistency (BC), and CLIP score.

\para{Impact of Memory Branches}
We ablate the contribution of each attention branch by progressively disabling CMP and SLC, with results reported in \cref{tab:abl-fused}.
Using SWA alone yields the weakest results, while adding either CMP or SLC brings consistent improvements.
Combining all three achieves the best performance, showing that coarse global memory (CMP) and fine-grained retrieval (SLC) are both essential complements to the local window (SWA).

\begingroup
\setlength{\textfloatsep}{0pt}
\setlength{\intextsep}{0pt}

\begin{table*}[t]
\centering

\begin{minipage}[t]{0.48\textwidth}
\centering
\captionof{table}{\textbf{Ablation on the three attention branches of the fused design.} 
}
\label{tab:abl-fused}
\setlength{\tabcolsep}{8pt}
\renewcommand{\arraystretch}{1.}
\resizebox{\linewidth}{!}{%
\begin{tabular}{lcccc}
\toprule
\textbf{Variant} & \textbf{IQ}  & \textbf{SC}  & \textbf{BC}  & \textbf{CLIP} \\
\midrule
SWA                    & 66.74 & 96.76 & 95.85 & 24.05 \\
CMP + SWA              & 69.31 & 97.89 & 96.80 & 24.89 \\ 
SLC + SWA              & 71.13 & 97.85 & 97.02 & 25.37 \\
\lightmidrule
\textbf{CMP + SLC + SWA}                   & \textbf{71.38} & \textbf{98.34} & \textbf{97.30} & \textbf{26.53} \\
\bottomrule
\end{tabular}
}
\end{minipage}
\hfill
\begin{minipage}[t]{0.47\textwidth}
\centering
\captionof{table}{
\textbf{Comparison of different window exclusion chunk size.} $^\dagger$ means the default setting.
}
\label{tab:abl-prog}
\setlength{\tabcolsep}{8pt}
\renewcommand{\arraystretch}{1.}
\resizebox{\linewidth}{!}{%
\begin{tabular}{lcccc}
\toprule
\textbf{Excluded Chunks} & \textbf{IQ} & \textbf{SC} & \textbf{BC} & \textbf{CLIP} \\
\midrule
0 chunks              & 66.13 & 96.78 & 96.52 & 24.51 \\
1 chunk               & 68.95 & 97.77 & 96.58 & 24.67 \\
2 chunks              & 71.41 & 97.90 & \textbf{96.71} & 26.38 \\
\lightmidrule
\textbf{3 chunks~$^\dagger$}    & \textbf{71.38} & \textbf{98.45} & 97.30 & \textbf{26.53} \\
\bottomrule
\end{tabular}
}
\end{minipage}
\vspace{-10pt}
\end{table*}

\endgroup


\para{Impact of Window Exclusion Size}
As discussed in \cref{sec:window_exclusive}, Adaptive Window Exclusion reduces the local bias in AR video generation.
We vary the number of excluded chunks to study how much of the sliding-window region should be removed from the selection candidates.
As shown in \cref{tab:abl-prog}, excluding more recent chunks substantially improves performance over no exclusion.
Removing the full SWA window ($3$-chunk) yields the best overall result and is used as our default.
This supports our design of narrowing the selection candidates to historical KV blocks outside the local window.

\begingroup
\setlength{\textfloatsep}{0pt}
\setlength{\intextsep}{0pt}

\begin{table*}[t]
\centering

\begin{minipage}[t]{0.32\textwidth}
\centering
\captionof{table}{
\textbf{Ablation on query-group size $G_q$.} 
$^\dagger$: default.
}
\label{tab:diff_q_share}
\setlength{\tabcolsep}{5pt}
\resizebox{\linewidth}{!}{%
\begin{tabular}{lcccc}
\toprule
\textbf{$G_q$} & \textbf{IQ} & \textbf{SC}  & \textbf{BC} & \textbf{CLIP} \\
\midrule
1              & 71.43 & 98.52 & 97.24 & 26.47 \\
15 $^\dagger$  & 71.38 & 98.45 & 97.30 & 26.53 \\
30             & 70.14 & 98.34 & 97.01 & 25.98 \\
60             & 69.52 & 98.26 & 96.93 & 24.80 \\
\bottomrule
\end{tabular}
}
\end{minipage}
\hfill
\begin{minipage}[t]{0.31\textwidth}
\centering
\captionof{table}{
\textbf{Ablation on head-group size $G_h$.}
}
\label{tab:abl-group}
\setlength{\tabcolsep}{4pt}
\renewcommand{\arraystretch}{1.15}
\resizebox{\linewidth}{!}{%
\begin{tabular}{lcccc}
\toprule
$G_h$ & \textbf{IQ}  & \textbf{SC}  & \textbf{BC}  & \textbf{CLIP}  \\
\midrule
12                      & 69.34 & 97.13 & 96.56 & 23.64 \\
3                      & 70.80 & 97.85 & 96.82 & 26.29 \\
\lightmidrule
\textbf{1}                     & \textbf{71.38} & \textbf{98.45} & \textbf{97.30} & \textbf{26.53} \\
\bottomrule
\end{tabular}
}
\end{minipage}
\hfill
\begin{minipage}[t]{0.32\textwidth}
\centering
\captionof{table}{
\textbf{Ablation on sink tokens and selection attention.}
}
\label{tab:analysis_sink}
\setlength{\tabcolsep}{2.5pt}
\renewcommand{\arraystretch}{1.15}
\small
\resizebox{\linewidth}{!}{%
\begin{tabular}{lcccc}
\toprule
\textbf{Variant} & \textbf{IQ}  & \textbf{SC}  & \textbf{BC} & \textbf{CLIP} \\
\midrule
SWA                & 66.74 & 96.76 & 95.85 & 24.05 \\
SWA+Sink           & 69.14 & 96.97 & 96.90 & 25.32 \\
SWA+SLC+Sink    & 70.50 & 97.76 & \textbf{97.25} & 25.29 \\
\lightmidrule
\textbf{SWA+ SLC}           & \textbf{71.13} & \textbf{97.85} & 97.02 & \textbf{25.37} \\
\bottomrule
\end{tabular}
}
\end{minipage}
\hfill
\vspace{-1em}
\end{table*}

\endgroup

\para{Impact of Query-Shared KV Selection}
We vary the group size $G_q$ to demonstrate the effect of query-shared selection.
%
%
\cref{tab:diff_q_share} indicates that increasing $G_q$ from 1 to 15 leaves all metrics nearly unchanged, confirming that adjacent queries within a chunk share similar context needs and can be safely grouped.
Therefore, query-shared KV selection mitigates the cross-query Union Explosion without sacrificing generation quality, and additionally improves training efficiency by reducing redundant selection computation.
Pushing the group size further to 30 or 60 starts to degrade all metrics, as a single shared selection can no longer cover the diverse content in a large query group.

\para{Impact of Per-Head KV Access}
An alternative way to eliminate cross-head divergence is to group the selection across different heads, so that grouped heads retrieve the same historical blocks.
We ablate this design in \cref{tab:abl-group}, where the head-group size $G_h$ denotes the number of heads sharing a common KV selection.
Sharing the selection across all 12 heads significantly degrades all metrics, and even a moderate size of $G_h=3$ still leaves a clear gap to per-head selection.
This indicates that different attention heads attend to genuinely different historical regions, and forcing them to share a single selection hurts generation quality.

Additional ablations on pooling size, Top-$K$, and LRU cache size are deferred to \appcref{app:sec:ablate}.

\subsection{Scalability of Long-Range Memory Access}
\label{sec:efficiency}
\begin{figure*}[t]
  \centering
  \subfloat{
    \includegraphics[width=0.45\linewidth]{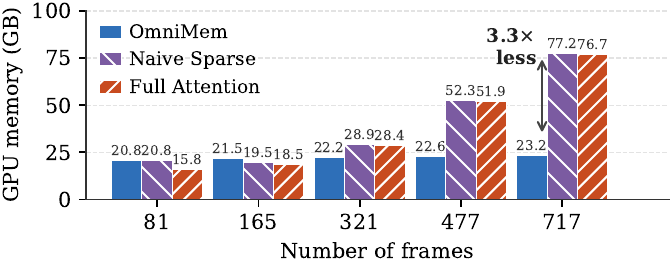}
  }
  \hspace{0.03\textwidth}
  \subfloat{
    \includegraphics[width=0.45\linewidth]{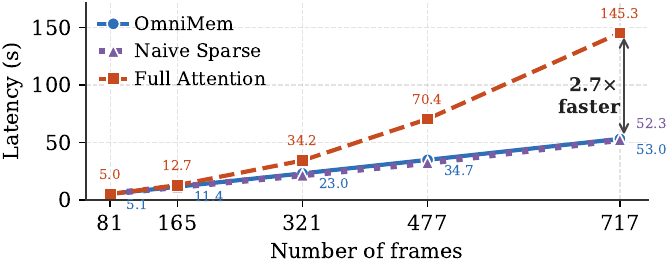}
  }
\caption{
\textbf{Memory access scalability.}
Naive Sparse reduces latency but loses the memory benefit due to Union Explosion.
\methodAbbr{} maintains memory usage nearly constant while remaining efficient.
}
\label{fig:efficiency_com}
\vspace{-12pt}
\end{figure*}

\cref{fig:efficiency_com} compares \methodAbbr{} with full-history attention and a Naive Sparse baseline, which applies per-query, per-head Top-$K$ retrieval without Query-Shared KV Selection or Per-Head Scattered KV Access.
Although Naive Sparse keeps the attention computation sparse, its selected-block union spans a large fraction of historical chunks.
To preserve low latency, these chunks must stay resident on GPU, causing memory to grow close to full-history attention.
In contrast, \methodAbbr{} keeps the active KV set bounded through query sharing and per-head scattered access.
At 717 frames, \methodAbbr{} uses $3.3\times$ less memory and runs $2.7\times$ faster than full-history attention, matching Naive Sparse latency.
\section{Discussion and Further Analysis}

\para{Dynamic Retrieval over Fixed Sink Tokens}
Sink-based long-video methods keep fixed early tokens or chunks as global anchors, motivated by the attention-sink pattern.
However, \cref{fig:attn_self_forcing} shows that many heads use historical context beyond the sink region.
In contrast, \methodAbbr{} dynamically retrieves full-resolution history, so sink-like regions can still be selected when useful without restricting retrieval to fixed anchors.
As shown in \cref{tab:analysis_sink}, adding fixed sink anchors to \methodAbbr{} brings no additional gain, indicating dynamic retrieval already captures the useful global context in our setting.

\para{Selection Attention Uses Historical Tokens More Effectively}
We further compare SWA+SLC with SWA+Sink under the same recent-window context.
SWA+SLC selects full-resolution historical blocks per query and head, rather than always reserving fixed anchors.
As shown in \cref{tab:analysis_sink}, it achieves better performance with fewer historical tokens, indicating a more effective use of historical memory.

%

%

In \appcref{sec:analysis}, we additionally observe that different heads exhibit distinct gating preferences across the three memory branches, \emph{revealing head-level specialization.} 
We also observe that \emph{the explicit retrieval design exhibits promising potential for zero-shot generalization} to longer videos.


    
    

\section{Conclusion}

We presented \methodAbbr, an explicit memory retrieval framework for chunk-based AR video generation.
Instead of relying solely on SWA and sink tokens, \methodAbbr~retrieves selected full-resolution historical KV blocks to enable long-range memory access.
To make this practical, \methodAbbr~addresses local-biased block selection and Union Explosion with Adaptive Window Exclusion, Query-Shared KV Selection, and Per-Head Scattered KV Access.
Experiments on 60-second generation show that \methodAbbr~improves long-video quality, temporal consistency, and motion dynamics over SOTA baselines while maintaining practical inference latency and memory footprint.

{
\small
\bibliographystyle{unsrt}
\bibliography{reference}

@article{yang2024cogvideox,
  title={CogVideoX: Text-to-Video Diffusion Models with An Expert Transformer},
  author={Yang, Zhuoyi and Teng, Jiayan and Zheng, Wendi and Ding, Ming and Huang, Shiyu and Xu, Jiazheng and Yang, Yuanming and Hong, Wenyi and Zhang, Xiaohan and Feng, Guanyu and others},
  journal={arXiv preprint arXiv:2408.06072},
  year={2024}
}

@article{zheng2024opensora,
  title={Open-Sora: Democratizing Efficient Video Production for All},
  author={Zheng, Zangwei and Peng, Xiangyu and Yang, Tianji and Shen, Chenhui and Li, Shenggui and Liu, Hongxin and Zhou, Yukun and Li, Tianyi and You, Yang},
  journal={arXiv preprint arXiv:2412.20404},
  year={2024}
}

@article{lin2024opensoraplan,
  title={Open-Sora Plan: Open-Source Large Video Generation Model},
  author={Lin, Bin and Ge, Yunyang and Cheng, Xinhua and Li, Zongjian and Zhu, Bin and Wang, Shaodong and He, Xianyi and Ye, Yang and Yuan, Shenghai and Chen, Liuhan and others},
  journal={arXiv preprint arXiv:2412.00131},
  year={2024}
}

@article{polyak2024moviegen,
  title={Movie Gen: A Cast of Media Foundation Models},
  author={Polyak, Adam and Zohar, Amit and Brown, Andrew and Tjandra, Andros and Sinha, Animesh and Lee, Ann and Vyas, Apoorv and others},
  journal={arXiv preprint arXiv:2410.13720},
  year={2024}
}

@article{zhang2023h2o,
  title={H2o: Heavy-hitter oracle for efficient generative inference of large language models},
  author={Zhang, Zhenyu and Sheng, Ying and Zhou, Tianyi and Chen, Tianlong and Zheng, Lianmin and Cai, Ruisi and Song, Zhao and Tian, Yuandong and R{\'e}, Christopher and Barrett, Clark and others},
  journal={Advances in Neural Information Processing Systems},
  volume={36},
  pages={34661--34710},
  year={2023}
}

@article{li2024snapkv,
  title={SnapKV: LLM Knows What You are Looking for Before Generation},
  author={Li, Yuhong and Huang, Yingbing and Yang, Bowen and Venkitesh, Bharat and Locatelli, Acyr and Ye, Hanchen and Cai, Tianle and Lewis, Patrick and Chen, Deming},
  journal={arXiv preprint arXiv:2404.14469},
  year={2024}
}

@article{hooper2024kvquant,
  title={KVQuant: Towards 10 Million Context Length LLM Inference with KV Cache Quantization},
  author={Hooper, Coleman and Kim, Sehoon and Mohammadzadeh, Hiva and Mahoney, Michael W. and Shao, Yakun Sophia and Keutzer, Kurt and Gholami, Amir},
  journal={Advances in Neural Information Processing Systems},
  volume={37},
  year={2024}
}

@article{xiao2023streamingllm,
  title={Efficient Streaming Language Models with Attention Sinks},
  author={Xiao, Guangxuan and Tian, Yuandong and Chen, Beidi and Han, Song and Lewis, Mike},
  journal={arXiv preprint arXiv:2309.17453},
  year={2023}
}

@article{wang2024loong,
  title={Loong: Generating Minute-level Long Videos with Autoregressive Language Models},
  author={Wang, Yuqing and Xiong, Tianwei and Zhou, Daquan and Lin, Zhijie and Zhao, Yang and Kang, Bingyi and Feng, Jiashi and Liu, Xihui},
  journal={arXiv preprint arXiv:2410.02757},
  year={2024}
}

@article{deng2024nova,
  title={Autoregressive Video Generation without Vector Quantization},
  author={Deng, Haoge and Pan, Ting and Diao, Haiwen and Luo, Zhengxiong and Cui, Yufeng and Lu, Huchuan and Shan, Shiguang and Qi, Yonggang and Wang, Xinlong},
  journal={arXiv preprint arXiv:2412.14169},
  year={2024}
}

@article{jin2024pyramidal,
  title={Pyramidal Flow Matching for Efficient Video Generative Modeling},
  author={Jin, Yang and Sun, Zhicheng and Li, Ningyuan and Xu, Kun and Xu, Kun and Jiang, Hao and Zhuang, Nan and Huang, Quzhe and Song, Yang and Mu, Yadong and Lin, Zhouchen},
  journal={arXiv preprint arXiv:2410.05954},
  year={2024}
}

@inproceedings{che2024gamegen,
  title={GameGen-X: Interactive Open-world Game Video Generation},
  author={Che, Haoxuan and He, Xuanhua and Liu, Quande and Jin, Cheng and Chen, Hao},
  booktitle={International Conference on Learning Representations},
  year={2025}
}

@article{deng2024causalfusion,
  title={Causal Diffusion Transformers for Generative Modeling},
  author={Deng, Chaorui and Zhu, Deyao and Li, Kunchang and Guang, Shi and Fan, Haoqi},
  journal={arXiv preprint arXiv:2412.12095},
  year={2024}
}

@article{su2024roformer,
 author = {Su, Jianlin and Ahmed, Murtadha and Lu, Yu and Pan, Shengfeng and Bo, Wen and Liu, Yunfeng},
 journal = {Neurocomputing},
 pages = {127063},
 publisher = {Elsevier},
 title = {Roformer: Enhanced transformer with rotary position embedding},
 volume = {568},
 year = {2024}
}

@misc{wan2025wanopenadvancedlargescale,
      title={Wan: Open and Advanced Large-Scale Video Generative Models}, 
      author={Team Wan and Ang Wang and Baole Ai and Bin Wen and Chaojie Mao and Chen-Wei Xie and Di Chen and Feiwu Yu and Haiming Zhao and Jianxiao Yang and Jianyuan Zeng and Jiayu Wang and Jingfeng Zhang and Jingren Zhou and Jinkai Wang and Jixuan Chen and Kai Zhu and Kang Zhao and Keyu Yan and Lianghua Huang and Mengyang Feng and Ningyi Zhang and Pandeng Li and Pingyu Wu and Ruihang Chu and Ruili Feng and Shiwei Zhang and Siyang Sun and Tao Fang and Tianxing Wang and Tianyi Gui and Tingyu Weng and Tong Shen and Wei Lin and Wei Wang and Wei Wang and Wenmeng Zhou and Wente Wang and Wenting Shen and Wenyuan Yu and Xianzhong Shi and Xiaoming Huang and Xin Xu and Yan Kou and Yangyu Lv and Yifei Li and Yijing Liu and Yiming Wang and Yingya Zhang and Yitong Huang and Yong Li and You Wu and Yu Liu and Yulin Pan and Yun Zheng and Yuntao Hong and Yupeng Shi and Yutong Feng and Zeyinzi Jiang and Zhen Han and Zhi-Fan Wu and Ziyu Liu},
      year={2025},
      eprint={2503.20314},
      archivePrefix={arXiv},
      primaryClass={cs.CV},
      url={https://arxiv.org/abs/2503.20314}, 
}

@article{huang2025self,
  title={Self forcing: Bridging the train-test gap in autoregressive video diffusion},
  author={Huang, Xun and Li, Zhengqi and He, Guande and Zhou, Mingyuan and Shechtman, Eli},
  journal={arXiv preprint arXiv:2506.08009},
  year={2025}
}

@article{yang2025longlive,
  title={Longlive: Real-time interactive long video generation},
  author={Yang, Shuai and Huang, Wei and Chu, Ruihang and Xiao, Yicheng and Zhao, Yuyang and Wang, Xianbang and Li, Muyang and Xie, Enze and Chen, Yingcong and Lu, Yao and Han, Song and Chen, Yukang},
  journal={arXiv preprint arXiv:2509.22622},
  year={2025}
}

@article{lingbot-va2026,
  title={Causal World Modeling for Robot Control},
  author={Li, Lin and Zhang, Qihang and Luo, Yiming and Yang, Shuai and Wang, Ruilin and Han, Fei and Yu, Mingrui and Gao, Zelin and Xue, Nan and Zhu, Xing and Shen, Yujun and Xu, Yinghao},
  journal={arXiv preprint arXiv:2601.21998},
  year={2026}
}

@article{bruce2024genie,
  title={Genie: Generative interactive environments},
  author={Jake Bruce and Michael Dennis and Ashley Edwards and Jack Parker-Holder  and Yuge Shi  and Edward Hughes  and Matthew Lai  and Aditi Mavalankar  and Richie Steigerwald  and Chris Apps  and Yusuf Aytar  and Sarah Bechtle  and Feryal Behbahani  and Stephanie Chan  and Nicolas Heess  and Lucy Gonzalez  and Simon Osindero  and Sherjil Ozair  and Scott Reed  and Jingwei Zhang  and Konrad Zolna  and Jeff Clune  and Nando de Freitas  and Satinder Singh  and Tim Rocktäschel},
  journal={Forty-first International Conference on Machine Learning},
  year={2024}
}

@article{liu2025rolling,
  title={Rolling forcing: Autoregressive long video diffusion in real time},
  author={Liu, Kunhao and Hu, Wenbo and Xu, Jiale and Shan, Ying and Lu, Shijian},
  journal={arXiv preprint arXiv:2509.25161},
  year={2025}
}

@article{cui2025self,
  title={Self-forcing++: Towards minute-scale high-quality video generation},
  author={Cui, Justin and Wu, Jie and Li, Ming and Yang, Tao and Li, Xiaojie and Wang, Rui and Bai, Andrew and Ban, Yuanhao and Hsieh, Cho-Jui},
  journal={arXiv preprint arXiv:2510.02283},
  year={2025}
}

@misc{ai2025magi1autoregressivevideogeneration,
      title={MAGI-1: Autoregressive Video Generation at Scale},
      author={Sand. ai and Hansi Teng and Hongyu Jia and Lei Sun and Lingzhi Li and Maolin Li and Mingqiu Tang and Shuai Han and Tianning Zhang and W. Q. Zhang and Weifeng Luo and Xiaoyang Kang and Yuchen Sun and Yue Cao and Yunpeng Huang and Yutong Lin and Yuxin Fang and Zewei Tao and Zheng Zhang and Zhongshu Wang and Zixun Liu and Dai Shi and Guoli Su and Hanwen Sun and Hong Pan and Jie Wang and Jiexin Sheng and Min Cui and Min Hu and Ming Yan and Shucheng Yin and Siran Zhang and Tingting Liu and Xianping Yin and Xiaoyu Yang and Xin Song and Xuan Hu and Yankai Zhang and Yuqiao Li},
      year={2025},
      eprint={2505.13211},
      archivePrefix={arXiv},
      primaryClass={cs.CV},
      url={https://arxiv.org/abs/2505.13211},
}

@article{wang2024vidprom,
  title={Vidprom: A million-scale real prompt-gallery dataset for text-to-video diffusion models},
  author={Wang, Wenhao and Yang, Yi},
  journal={Advances in Neural Information Processing Systems},
  volume={37},
  pages={65618--65642},
  year={2024}
}

@article{ji2025memflow,
  title={Memflow: Flowing adaptive memory for consistent and efficient long video narratives},
  author={Ji, Sihui and Chen, Xi and Yang, Shuai and Tao, Xin and Wan, Pengfei and Zhao, Hengshuang},
  journal={arXiv preprint arXiv:2512.14699},
  year={2025}
}

@inproceedings{yin2025slow,
  title={From slow bidirectional to fast autoregressive video diffusion models},
  author={Yin, Tianwei and Zhang, Qiang and Zhang, Richard and Freeman, William T and Durand, Fredo and Shechtman, Eli and Huang, Xun},
  booktitle={Proceedings of the IEEE/CVF Conference on Computer Vision and Pattern Recognition},
  pages={22963--22974},
  year={2025}
}

@article{deng2024autoregressive,
  title={Autoregressive video generation without vector quantization},
  author={Deng, Haoge and Pan, Ting and Diao, Haiwen and Luo, Zhengxiong and Cui, Yufeng and Lu, Huchuan and Shan, Shiguang and Qi, Yonggang and Wang, Xinlong},
  journal={arXiv preprint arXiv:2412.14169},
  year={2024}
}

@inproceedings{huang2024vbench,
  title={Vbench: Comprehensive benchmark suite for video generative models},
  author={Huang, Ziqi and He, Yinan and Yu, Jiashuo and Zhang, Fan and Si, Chenyang and Jiang, Yuming and Zhang, Yuanhan and Wu, Tianxing and Jin, Qingyang and Chanpaisit, Nattapol and others},
  booktitle={Proceedings of the IEEE/CVF Conference on Computer Vision and Pattern Recognition},
  pages={21807--21818},
  year={2024}
}

@article{huang2025vbench++,
     title={{VBench++}: Comprehensive and Versatile Benchmark Suite for Video Generative Models},
     author={Huang, Ziqi and Zhang, Fan and Xu, Xiaojie and He, Yinan and Yu, Jiashuo and Dong, Ziyue and Ma, Qianli and Chanpaisit, Nattapol and Si, Chenyang and Jiang, Yuming and Wang, Yaohui and Chen, Xinyuan and Chen, Ying-Cong and Wang, Limin and Lin, Dahua and Qiao, Yu and Liu, Ziwei},
     journal={IEEE Transactions on Pattern Analysis and Machine Intelligence}, 
     year={2025},
     doi={10.1109/TPAMI.2025.3633890}
 }

@misc{chen2025skyreelsv2infinitelengthfilmgenerative,
      title={SkyReels-V2: Infinite-length Film Generative Model}, 
      author={Guibin Chen and Dixuan Lin and Jiangping Yang and Chunze Lin and Junchen Zhu and Mingyuan Fan and Hao Zhang and Sheng Chen and Zheng Chen and Chengcheng Ma and Weiming Xiong and Wei Wang and Nuo Pang and Kang Kang and Zhiheng Xu and Yuzhe Jin and Yupeng Liang and Yubing Song and Peng Zhao and Boyuan Xu and Di Qiu and Debang Li and Zhengcong Fei and Yang Li and Yahui Zhou},
      year={2025},
      eprint={2504.13074},
      archivePrefix={arXiv},
      primaryClass={cs.CV},
      url={https://arxiv.org/abs/2504.13074}, 
}

@inproceedings{CLIP,
  title={Learning transferable visual models from natural language supervision},
  author={Radford, Alec and Kim, Jong Wook and Hallacy, Chris and Ramesh, Aditya and Goh, Gabriel and Agarwal, Sandhini and Sastry, Girish and Askell, Amanda and Mishkin, Pamela and Clark, Jack and others},
  booktitle={International Conference on Machine Learning},
  pages={8748--8763},
  year={2021},
  organization={PMLR}
}

@article{HunyuanVideo,
  title={HunyuanVideo: A Systematic Framework For Large Video Generative Models},
  author={Kong, Weijie and Tian, Qi and Zhang, Zijian and Min, Rox and Dai, Zuozhuo and Zhou, Jin and Xiong, Jiangfeng and Li, Xin and Wu, Bo and Zhang, Jianwei and others},
  journal={arXiv preprint arXiv:2412.03603},
  year={2024}
}

@article{LTX-video,
  title={Ltx-video: Realtime video latent diffusion},
  author={HaCohen, Yoav and Chiprut, Nisan and Brazowski, Benny and Shalem, Daniel and Moshe, Dudu and Richardson, Eitan and Levin, Eran and Shiran, Guy and Zabari, Nir and Gordon, Ori and others},
  journal={arXiv preprint arXiv:2501.00103},
  year={2024}
}

@inproceedings{yuan2025native,
  title={Native sparse attention: Hardware-aligned and natively trainable sparse attention},
  author={Yuan, Jingyang and Gao, Huazuo and Dai, Damai and Luo, Junyu and Zhao, Liang and Zhang, Zhengyan and Xie, Zhenda and Wei, Yuxing and Wang, Lean and Xiao, Zhiping and others},
  booktitle={Proceedings of the 63rd Annual Meeting of the Association for Computational Linguistics (Volume 1: Long Papers)},
  pages={23078--23097},
  year={2025}
}

@inproceedings{cai2026mmm,
  title     = {Mode Seeking meets Mean Seeking for Fast Long Video Generation},
  author    = {Cai, Shengqu and Nie, Weili and Liu, Chao and Berner, Julius and
               Zhang, Lvmin and Ma, Nanye and Chen, Hansheng and Agrawala, Maneesh and
               Guibas, Leonidas and Wetzstein, Gordon and Vahdat, Arash},
  booktitle = {arXiv},
  year      = {2026},
}

@article{xi2026quant,
  title={Quant VideoGen: Auto-Regressive Long Video Generation via 2-Bit KV-Cache Quantization},
  author={Xi, Haocheng and Yang, Shuo and Zhao, Yilong and Li, Muyang and Cai, Han and Li, Xingyang and Lin, Yujun and Zhang, Zhuoyang and Zhang, Jintao and Li, Xiuyu and others},
  journal={arXiv preprint arXiv:2602.02958},
  year={2026}
}

@article{zhang2025blockvid,
  title={BlockVid: Block Diffusion for High-Quality and Consistent Minute-Long Video Generation},
  author={Zhang, Zeyu and Chang, Shuning and He, Yuanyu and Han, Yizeng and Tang, Jiasheng and Wang, Fan and Zhuang, Bohan},
  journal={arXiv preprint arXiv:2511.22973},
  year={2025}
}

@article{guo2026efficient,
  title={Efficient Autoregressive Video Diffusion with Dummy Head},
  author={Guo, Hang and Jia, Zhaoyang and Li, Jiahao and Li, Bin and Cai, Yuanhao and Wang, Jiangshan and Li, Yawei and Lu, Yan},
  journal={arXiv preprint arXiv:2601.20499},
  year={2026}
}

@article{zhang2025pretraining,
  title={Pretraining Frame Preservation in Autoregressive Video Memory Compression},
  author={Zhang, Lvmin and Cai, Shengqu and Li, Muyang and Zeng, Chong and Lu, Beijia and Rao, Anyi and Han, Song and Wetzstein, Gordon and Agrawala, Maneesh},
  journal={arXiv preprint arXiv:2512.23851},
  year={2025}
}

@article{yu2025videossm,
  title={Videossm: Autoregressive long video generation with hybrid state-space memory},
  author={Yu, Yifei and Wu, Xiaoshan and Hu, Xinting and Hu, Tao and Sun, Yangtian and Lyu, Xiaoyang and Wang, Bo and Ma, Lin and Ma, Yuewen and Wang, Zhongrui and others},
  journal={arXiv preprint arXiv:2512.04519},
  year={2025}
}

@article{zhao2026s2dit,
  title={S2DiT: Sandwich Diffusion Transformer for Mobile Streaming Video Generation},
  author={Zhao, Lin and Wu, Yushu and Lebedev, Aleksei and Lahiri, Dishani and Dong, Meng and Sahni, Arpit and Vasilkovsky, Michael and Chen, Hao and Hu, Ju and Siarohin, Aliaksandr and others},
  journal={arXiv preprint arXiv:2601.12719},
  year={2026}
}

@article{yang2026pafukv,
  title={Past- and Future-Informed KV Cache Policy with Salience Estimation in Autoregressive Video Diffusion},
  author={Yang, Xu and others},
  journal={arXiv preprint arXiv:2601.21896},
  year={2026}
}

@article{shen2025draftattention,
  title={Draftattention: Fast video diffusion via low-resolution attention guidance},
  author={Shen, Xuan and Han, Chenxia and Zhou, Yufa and Xie, Yanyue and Gong, Yifan and Wang, Quanyi and Wang, Yiwei and Wang, Yanzhi and Zhao, Pu and Gu, Jiuxiang},
  journal={arXiv preprint arXiv:2505.14708},
  year={2025}
}

@inproceedings{
shen2026fastcar,
title={Fastcar: Cache Attentive Replay for Fast Auto-Regressive Video Generation on the Edge},
author={Xuan Shen and Weize Ma and Yufa Zhou and Enhao Tang and Yanyue Xie and Zhengang Li and Yifan Gong and Quanyi Wang and Henghui Ding and Yiwei Wang and Pu Zhao and Jun Lin and Jiuxiang Gu},
booktitle={The Fourteenth International Conference on Learning Representations},
year={2026}
}

@article{wu2025taming,
  title={Taming Diffusion Transformer for Efficient Mobile Video Generation in Seconds},
  author={Wu, Yushu and Li, Yanyu and Kag, Anil and Skorokhodov, Ivan and Menapace, Willi and Ma, Ke and Sahni, Arpit and Hu, Ju and Siarohin, Aliaksandr and Sagar, Dhritiman and others},
  journal={arXiv preprint arXiv:2507.13343},
  year={2025}
}
}


\appendix
\renewcommand{\thefigure}{\thesection\arabic{figure}}
\renewcommand{\thetable}{\thesection\arabic{table}}
\renewcommand{\theequation}{\thesection\arabic{equation}}

\makeatletter
\@addtoreset{figure}{section}
\@addtoreset{table}{section}
\@addtoreset{equation}{section}
\makeatother

\clearpage

\section*{Limitation and Broader Impact}
\methodAbbr{} is evaluated on a single open-sourced DiT backbone, Wan2.1-T2V-1.3B, aligned with recent works.
This controlled setting helps isolate the effect of explicit memory retrieval, but validation on larger backbones and other video generation pipelines remains as future work.
Our Triton selection-attention kernel is also tuned for current GPUs, and the latency and memory gains may vary across hardware platforms.

By making long-video generation more efficient, \methodAbbr{} may reduce the compute cost of creative tools, simulation, and world-model research.
At the same time, higher-quality long video generation can be misused for disinformation or impersonation, a risk shared with general text-to-video systems.
Practical deployments should be paired with safeguards such as content filtering, watermarking, and dataset controls.

\section{Preliminaries}
\label{app:sec:preliminary}

\para{Autoregressive Video Generation}
We consider chunk-based AR video generation with a causal  DiT  backbone.
Given a conditioning signal $c$ (\eg, a text prompt), the model generates a video as a sequence of latent chunks
$\{\mathbf{z}^{(1)}, \mathbf{z}^{(2)}, \dots, \mathbf{z}^{(N)}\}$.
At the $n^{th}$ AR step, the next chunk is generated conditioned on $c$ and all previously generated chunks,
\begin{equation}
\small
\mathbf{z}^{(<n)} = \{\mathbf{z}^{(1)}, \dots, \mathbf{z}^{(n-1)}\},
\qquad
\mathbf{z}^{(n)} \sim p_\theta\!\left(\mathbf{z}^{(n)} \mid c, \mathbf{z}^{(<n)}\right),
\end{equation}
where $p_\theta$ denotes the conditional generation model.
Each chunk contains $T_c$ latent frames with spatial resolution $H_s \times W_s$.
After flattening the spatiotemporal dimensions, one chunk yields $S = T_c H_s W_s$ latent tokens.
Unlike token-wise autoregressive generation in LLMs, DiT-based chunk generation processes all $S$ tokens in the current chunk jointly at each denoising step.
As a result, a single autoregressive step introduces thousands of query tokens simultaneously, leading to a substantially different attention and memory access pattern from single-token decoding.

\para{KV Cache}
At the $n^{th}$ AR step, attention is computed over both the current chunk and all previously generated chunks, with causality enforced across chunks.
For efficiency, the keys and values of previous chunks are cached and reused.
For attention layer $l$ and head $h$, we denote the full causal KV context as
\begin{equation}
\small
\mathbf{K}_{l,h}^{(\le n)} = [\mathbf{K}_{l,h}^{(1)}; \dots; \mathbf{K}_{l,h}^{(n)}], \quad
\mathbf{V}_{l,h}^{(\leq n)} = [\mathbf{V}_{l,h}^{(1)}; \dots; \mathbf{V}_{l,h}^{(n)}].
\end{equation}
Here, the first $n\!-\!1$ entries correspond to the historical KV cache, while the last entry comes from the current chunk.
As $n$ increases, the historical cache grows linearly with the number of generated chunks.
Dense attention over the full historical cache incurs increasing computation and memory footprint, making long-video generation difficult to scale.

\section{Implementation Details}
\label{sec:supp:impl}
\para{Base Model and Training}
We build OmniMem on top of Wan2.1-T2V-1.3B~\cite{wan2025wanopenadvancedlargescale} and follow a three-stage recipe.
We first attach the memory module and fine-tune with ODE initialization on the VidProM prompts, following CausVid~\cite{yin2025slow}.
We then run Self-Forcing~\cite{huang2025self} to reduce the train-test gap during AR rollout, and finally apply the long-video tuning recipe from LongLive~\cite{yang2025longlive}.

\para{Training Hyperparameters}
The ODE stage runs for 10K steps with a global batch size of 128 and a learning rate of $1\!\times\!10^{-5}$.
AdamW is used with $\beta = (0.9, 0.999)$.
The Self-Forcing stage runs for 600 steps and the long-video tuning stage runs for 500 steps, both with a global batch size of 64.
For these two stages, we follow the optimizer settings of Self-Forcing and LongLive, using a learning rate of $4\!\times\!10^{-7}$ for the generator and $2\!\times\!10^{-6}$ for the fake score, with AdamW $\beta = (0, 0.999)$ and $0.01$ weight decay.
All training is performed on 4 nodes with $8\times$ NVIDIA H100 GPUs, with gradient accumulation.

\para{Memory Module}
The pooling block size is $15 \times 2$, and the compression branch is a plain mean pool over each spatial neighborhood.
The query-group size is 15, and each group selects the top-4 historical blocks per head for selection attention.
The sliding window covers the 3 most recent chunks.
Adaptive Window Exclusion is enabled once at least 3 chunks of history exist outside the sliding window.
The compression, selection, and sliding-window outputs are combined through learned per-branch sigmoid gates.
The gate parameters are initialized with a normal distribution and trained jointly with the rest of the memory module, starting from the ODE stage.

\para{Triton Kernel and Tile Padding}
We implement both the forward and backward selection attention as Triton kernels (Triton 3.5.1), so that the entire training pipeline runs on the same sparse path as inference.
The kernel reads non-contiguous KV blocks via the pointer table described in ~\cref{sec:per-head} and computes attention using an online softmax.
For efficient use of the H100 tensor cores, the tile size along the sequence axis should be a power of two to match the matrix-multiply shapes the tensor cores expect.
Our pooling block covers $15 \times 2 = 30$ tokens, which is not a power of two.
We therefore pad each tile to 32 tokens inside the kernel and mask the two extra positions in the softmax, so that the padded entries do not affect the attention output.
The padding overhead is small (about 6.7\% of the tile), while the kernel reaches 93.8\% utilization, the highest among the candidate pooling shapes we tested.
The head dimension is already 128 and needs no padding.
A full sweep over other pooling shapes is reported in~\cref{tab:bench_pooling_size}.

\para{Inference and Offloading}
The efficiency numbers in ~\cref{sec:efficiency} are measured on a single H100 GPU.
The GPU hot cache keeps 7 chunks resident and evicts cold ones with an LRU policy.
We ablate this size in ~\cref{sec:lru-ablation}.
Offloaded KV blocks live in pinned host memory and are reloaded through PCIe when their chunks are selected.
To keep PCIe transfers efficient, we explicitly bind each GPU process to the NUMA~(Non-Uniform Memory Access) node that owns its PCIe root, thereby avoiding cross-socket traffic during reload.

\section{Shared Chunk-level KV cache and Per-Head Selection Forward}
\label{sec:supp:per_head_access}
\begin{figure}[h]
    \centering
    \includegraphics[width=0.4\linewidth]{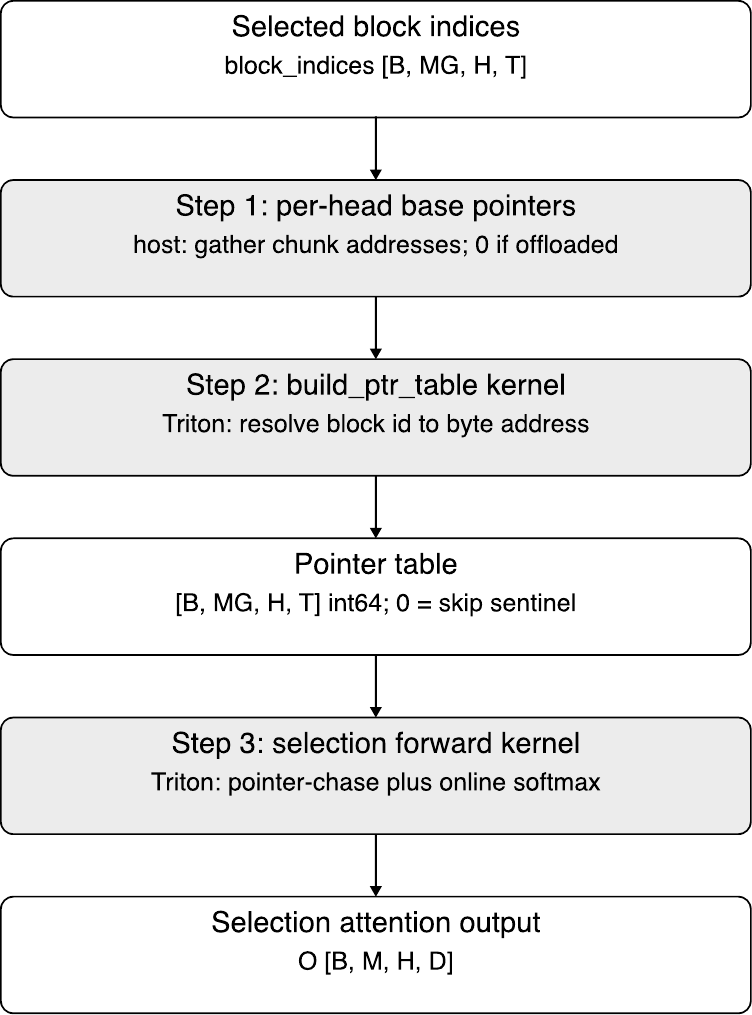}
    \caption{
    \textbf{Implementation pipeline of selection attention.}
    Step 1 runs on the host and collects the GPU addresses of the KV chunks each head needs. 
    Step 2 is a Triton kernel that turns each selected block index into its final byte address, building the pointer table.
    Step 3 is the selection forward kernel. It follows each pointer in the table to read the corresponding KV block from its original storage, and skips any entry marked with 0. Only addresses pass through the pipeline; the KV data itself stays in its per-head chunk storage.
    }
    \label{fig:supp:per_head_pipeline}
\end{figure}

\methodAbbr~maintains two levels of KV cache.
The first level is a compact compressed cache for the compression branch, and the second level is a full-resolution chunk-level cache shared by the sliding-window and selection branches.

\para{Compressed Cache}
For each generated chunk, we pool its KV states into compressed KV blocks and append them to a contiguous compressed history.
Since this cache is much shorter than the full-resolution KV cache, keeping it as a contiguous sequence makes compression attention simple and efficient.

\para{Full-resolution Chunk-level Cache}
The second level is a full-resolution chunk-level KV cache shared by the sliding-window branch and the selection branch.
For each layer and head, the cache is stored as a list of chunk tensors,
$\{\mathbf{K}_{l,h}^{(1)}, \ldots, \mathbf{K}_{l,h}^{(n)}\}$ and
$\{\mathbf{V}_{l,h}^{(1)}, \ldots, \mathbf{V}_{l,h}^{(n)}\}$.
Sliding-window attention accesses the most recent $W$ chunks from this cache.
Because the window is contiguous and fixed-size, this requires concatenating only $W$ large chunk tensors.
A block-level physical cache would instead require concatenating $W\lceil S/B\rceil$ small tensors for the same window, where $S$ is the number of tokens per chunk and $B$ is the KV block size.
Both layouts access the same tokens and require no padding, but the many-small-tensor concatenation introduces higher overhead in our list-based implementation.

\para{Selection Access via Pointer Table}
Selection attention uses the same full-resolution chunk-level cache, but accesses it through block indices.
Each selected block index is mapped to a chunk index and an intra-chunk block offset.
For each attention head, the sparse attention kernel receives a variable-length list of selected block indices and directly reads the corresponding KV blocks through a pointer table.
The pointer table itself is built by a lightweight GPU kernel that maps each (chunk index, intra-chunk offset) pair to the physical address of the corresponding KV blocks.
Only \texttt{int64} addresses are gathered, so this step carries address information rather than the raw KV data.
An entry of $0$ in the pointer table means that the chunk is on CPU, and the selection forward kernel skips these blocks with a single integer check, as shown in \Cref{alg:sel-fwd}.
Thus, selection attention does not concatenate selected KV blocks into a temporary tensor, and different heads do not need to be padded to the same selected length.
This two-level layout keeps compression attention compact, sliding-window attention contiguous, and selection attention fine-grained.

\begin{algorithm}[t]
\caption{Per-Head Selection Forward via Pointer Table}
\label{alg:sel-fwd}
\begin{algorithmic}[1]
\Require Queries $Q$, selected block ids $I[b, g, h, 1{:}K]$
\Require Selection mask $M[b, g, h, 1{:}K]$ \Comment{$M=1$ if the block is selected}
\Require Per-head GPU ptr table $\Pi_\ell$, blocks per chunk $B_{pc}$
\Require Query-group size $G_q$, softmax scale $\sigma$
\Statex
\State \textbf{// Step 1. Build per-$(b, g, h, t)$ pointer table.}
\For{each $(b, g, h, t)$ in parallel}
    \If{$M[b,g,h,t] = 0$}
        \State $P[b,g,h,t] \gets 0$ \Comment{not selected; skip in attention}
        \State \textbf{continue}
    \EndIf
    \State $c \gets I[b, g, h, t] \div B_{pc}$ \Comment{chunk id of the selected block}
    \State $o \gets I[b, g, h, t] \bmod B_{pc}$ \Comment{block offset within the chunk}
    \If{$\Pi_\ell[h, c] = 0$}
        \State \Call{WaitUntilReload}{$h,c$} \Comment{selected chunk is on CPU; wait until resident}
    \EndIf
    \State $P[b, g, h, t] \gets \Pi_\ell[h, c] + \text{offset}(o)$
\EndFor
\Statex
\State \textbf{// Step 2. Selection forward with online softmax.}
\For{each $(b, g, h)$ in parallel}
    \State $\mathbf{q} \gets Q[b,\ g\text{-th group},\ h]$ \Comment{$G_q$ query tokens}
    \State $m \gets -\infty; \quad d \gets 0; \quad \mathbf{a} \gets \mathbf{0}$ \Comment{online softmax state}
    \For{$t = 1, \dots, K$}
        \State $p \gets P[b, g, h, t]$
        \If{$p = 0$}
            \State \textbf{continue} \Comment{not selected or padded entry}
        \EndIf
        \State Load $\mathbf{K}_t, \mathbf{V}_t$ from address $p$
        \State $\mathbf{u} \gets \sigma \cdot \mathbf{q} \mathbf{K}_t^\top$
        \State $m' \gets \max(m,\ \max \mathbf{u})$
        \State $\mathbf{w} \gets \exp(\mathbf{u} - m'); \quad \alpha \gets \exp(m - m')$
        \State $d \gets d \cdot \alpha + \sum \mathbf{w}$
        \State $\mathbf{a} \gets \mathbf{a} \cdot \alpha + \mathbf{w} \mathbf{V}_t$
        \State $m \gets m'$
    \EndFor
    \State $O[b,\ g\text{-th group},\ h] \gets \mathbf{a} / d$
\EndFor
\end{algorithmic}
\end{algorithm}

\section{The Visualization of Attention Map}
\label{app:sec:attn_map}

\begin{figure*}[h]
  \centering
  \includegraphics[width=\linewidth]{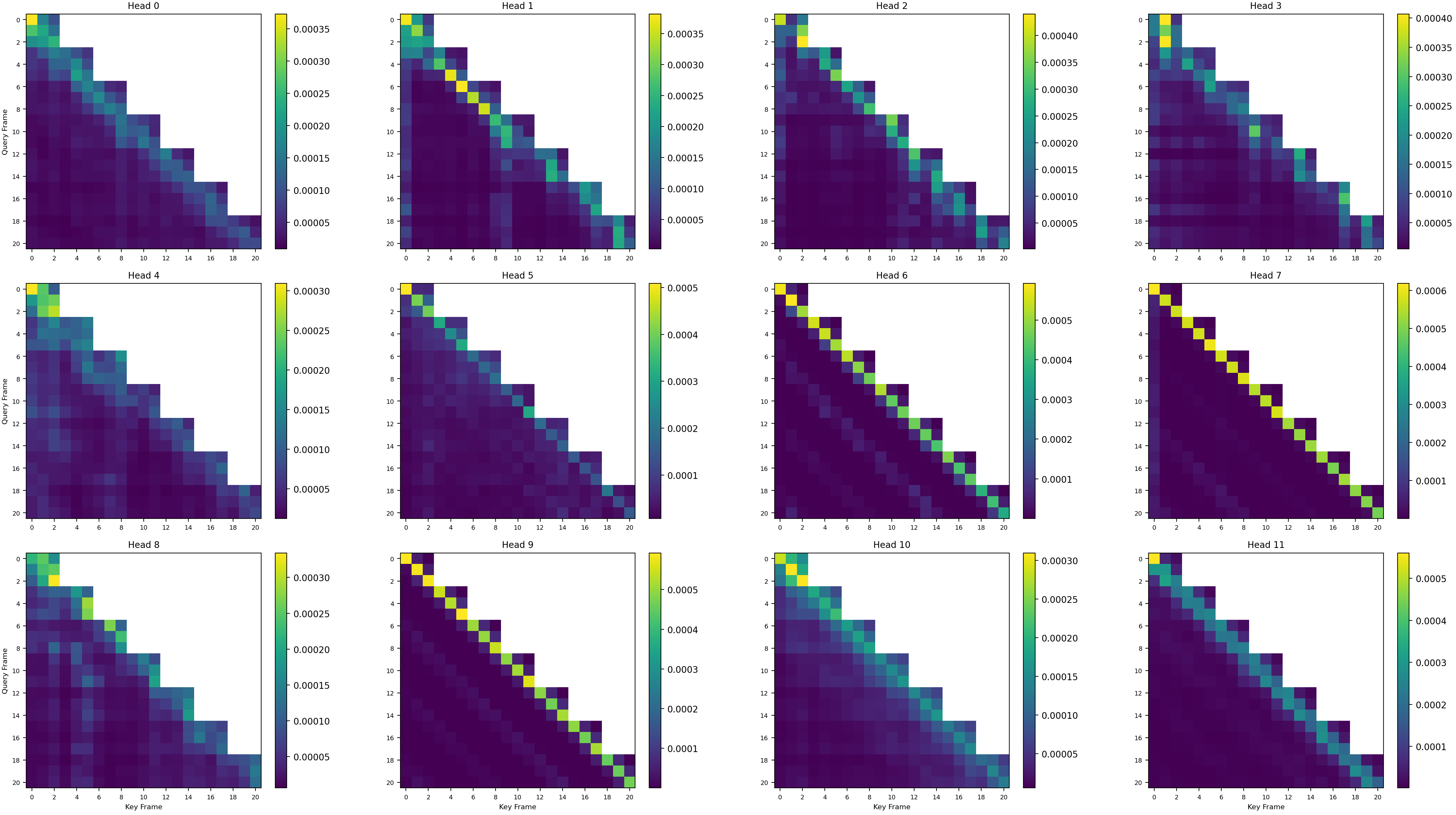}
\caption{
\textbf{Attention Map across different heads of Self-Forcing \cite{huang2025self} with full memory.}
}
\label{fig:attn_self_forcing}
\end{figure*}

We visualize the per-head attention patterns of an autoregressive video generation model to reveal how different heads attend to the historical context. We adopt Self-Forcing~\cite{huang2025self}, which retains the full historical KV cache without any compression. Therefore, the resulting attention maps faithfully reflect the intrinsic preference of each head.
We generate a $5$-second video and show the chunk-level attention map of each head in \cref{fig:attn_self_forcing}.

We observe two clear patterns: while some heads concentrate on recent frames, many others spread their attention across scattered positions in the history (\eg, Heads 0, 3, 4, 8, 10).
This confirms that (i) different heads exhibit distinct retrieval preferences (cross-head divergence), and (ii) a substantial fraction of heads rely on non-local historical context, which would be lost under sliding-window or sink-only memory.

\section{Short Video Generation.}
\label{app:sec:short_video}
\begin{table*}[h]
\centering
\caption{\textbf{Quantitative comparison on short video generation under the standard VBench~\cite{huang2024vbench} prompt suite.} We compare against other autoregressive models.}
\label{tab:short-vbench}
\begin{tabular}{l c c ccc}
\toprule
\multirow{2}{*}{Model} & \multirow{2}{*}{\#Params} & \multirow{2}{*}{Resolution} & \multicolumn{3}{c}{Vbench $\uparrow$} \\
\cline{4-6}
 &  &  & Total & Quality & Semantic \\
\midrule
CausVid~\cite{yin2025slow}                & 1.3B & $832{\times}480$ & 81.20          & 84.05          & 69.80 \\
Pyramid Flow~\cite{jin2024pyramidal}      & 2B   & $640{\times}384$ & 81.72          & 84.74          & 69.62 \\
MAGI-1~\cite{ai2025magi1autoregressivevideogeneration} & 4.5B & $832{\times}480$ & 79.18          & 82.04          & 67.74 \\
SkyReels-V2~\cite{chen2025skyreelsv2infinitelengthfilmgenerative}       & 1.3B & $960{\times}540$ & 82.67          & 84.70          & 74.53 \\
NOVA~\cite{deng2024autoregressive}        & 0.6B & $768{\times}480$ & 80.12          & 80.39          & 79.05 \\
LongLive~\cite{yang2025longlive}          & 1.3B & $832{\times}480$ & 83.26 & 85.05 & 76.13 \\
MemFlow~\cite{ji2025memflow}              & 1.3B & $832{\times}480$ & 83.13          & 84.94          & 75.89 \\
Self-Forcing~\cite{huang2025self}         & 1.3B & $832{\times}480$ & \textbf{84.31}          & 85.07          & \textbf{81.28} \\
\midrule
\textbf{\methodAbbr}                      & 1.3B & $832{\times}480$ & 84.29             & \textbf{86.28}             & 76.31 \\
\bottomrule
\end{tabular}%
\end{table*}
For completeness, we also evaluate \methodAbbr~ on base 5-second video generation in \cref{tab:short-vbench} using the standard VBench~\cite{huang2024vbench} prompt suite.
Although our method is primarily designed for long-range memory retrieval, we verify that the proposed components also benefit for the short video quality.

Among these baselines, Self-Forcing~\cite{huang2025self} retains the full historical KV cache and serves as a strong reference for upper-bound quality on short videos.
\methodAbbr~achieves a Total VBench score of $84.29$, matching Self-Forcing ($84.31$) within $0.02$ points, while surpassing all other baselines that rely on memory compression or restricted context.
Notably, \methodAbbr~obtains the highest Quality score across the methods, suggesting that explicit per-head retrieval avoids the quality degradation.

\section{Extended Ablation Studies}
\label{app:sec:ablate}

\subsection{Impact of Pooling Size}

\begingroup
\setlength{\textfloatsep}{0pt}
\setlength{\intextsep}{0pt}

\begin{table*}[t]
\centering

\begin{minipage}[t]{0.48\textwidth}
\centering
\captionof{table}{
\textbf{Ablation of different KV block size $h \times w$}. $h$ and $w$ denote the height and width of each block.}
\label{tab:abl-pool}
\setlength{\tabcolsep}{6pt}
\renewcommand{\arraystretch}{1.15}
\small
\resizebox{\linewidth}{!}{%
\begin{tabular}{lcccc}
\toprule
\textbf{Pooling} & \textbf{IQ}  & \textbf{SC}  & \textbf{BC} & \textbf{CLIP} \\
\midrule
$15{\times}2$ (Top-$8$, 240 tokens)              & 71.42 & 98.41 & 97.71 & 27.12 \\
$15{\times}4$ (Top-$4$, 240 tokens)              & 71.02 & 98.31 & 97.04 & 26.28 \\
one frame (Top-$1$, 1560 tokens)                 & 69.98 & 98.30 & 97.01 & 26.71 \\
\bottomrule
\end{tabular}
}
\end{minipage}
\hfill
\begin{minipage}[t]{0.48\textwidth}
\centering
\captionof{table}{
\textbf{Ablation of Top-$K$ Selection.}
}
\label{tab:abl-topk}
\small
\resizebox{0.9\linewidth}{!}{%
\begin{tabular}{lcccc}
\toprule
\textbf{Top-$K$} & \textbf{IQ}  & \textbf{SC}  & \textbf{BC} & \textbf{CLIP} \\
\midrule
$K=1$              & 71.24 & 97.86 & 96.56 & 25.98 \\
$K=4$              & 71.38 & 98.34 & 97.30 & 26.53 \\
$K=8$              & 71.42 & 98.41 & 97.71 & 27.12 \\
$K=12$             & 72.31 & 98.96 & 97.69 & 27.44 \\
\bottomrule
\end{tabular}
}
\end{minipage}

\end{table*}

\endgroup
We ablate the pooling size in \cref{tab:abl-pool}.
Since our selection attention is implemented with Triton block-sparse kernels, the pooling size directly affects kernel utilization. As discussed in \cref{tab:bench_pooling_size} of the appendix, we restrict the search to configurations whose utilization exceeds $90\%$ for efficiency.
We compare three pooling configurations. The first two ($15{\times}2$ with Top-$8$ and $15{\times}4$ with Top-$4$) are matched to access the same total number of historical tokens ($240$), isolating the effect of pooling granularity from the token budget. The third uses one frame as the pooling unit (Top-$1$), which corresponds to the coarsest granularity and accesses substantially more tokens ($1560$).
Under the same token budget, $15{\times}2$ outperforms $15{\times}4$ on each metric, indicating that finer pooling provides more accurate selection than coarser pooling.
The frame-level configuration performs worst despite accessing $6.5\times$ more tokens, confirming the importance of fine-grained selection.

\subsection{Impact of Top-$K$ Selection}
Top-$K$ determines how many compressed KV blocks each head selects from the candidate pool per query group.
We sweep $k \in \{1, 4, 8, 12\}$ and finetune each variant for long video generation from the same short video base model. The results are reported in \cref{tab:abl-topk}.
All metrics improve monotonically as $K$ increases, with $K=12$ achieving the best performance across the board.
The improvement exhibits clear diminishing returns: increasing $K$ from $1$ to $4$ yields a noticeable gain (\eg, $+0.55$ in CLIP), while further increasing $K$ to $8$ or $12$ brings smaller marginal improvements.
This indicates that a moderate number of selected blocks is sufficient to capture the long-range context relevant to each query group.

\subsection{The Efficiency–Memory Trade-off of LRU Size}
\begin{figure*}[t]
  \centering
  \subfloat[Inference Latency]{
        \includegraphics[width=0.32\linewidth]{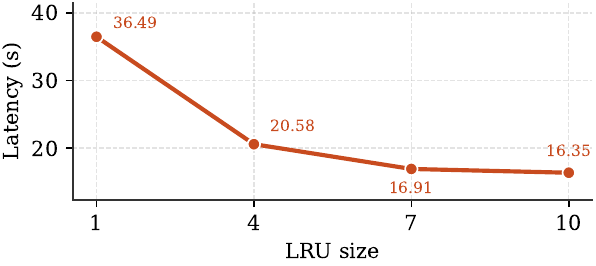}
    }
    \hfill
    \subfloat[Memory Consumption]{
        \includegraphics[width=0.32\linewidth]{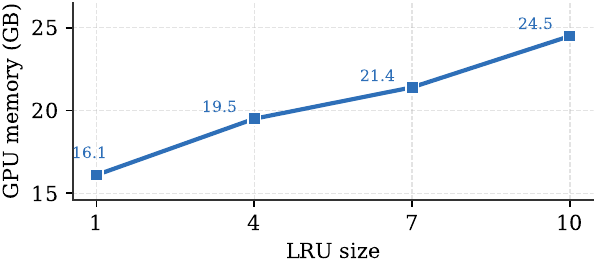}
    }
    \hfill
    \subfloat[Cache Hit Rate]{
        \includegraphics[width=0.32\linewidth]{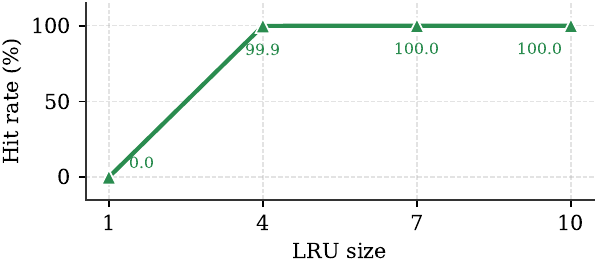}
    }
\caption{
\textbf{Impact of LRU cache size on 15-second video generation.} We compare (a) inference latency, (b) GPU memory consumption, and (c) cache hit rate under different LRU cache sizes.
}
\label{fig:ablate_lru_size}
\end{figure*}
\label{sec:lru-ablation}
The LRU cache stores recently retrieved KV blocks on the GPU, allowing them to be reused without reloading from CPU memory.
\cref{fig:ablate_lru_size} reports the impact of LRU size $G_{\text{LRU}}$ on inference latency, GPU memory, and cache hit rate.
With $G_{\text{LRU}}=1$, the cache is too small to retain frequently selected blocks, leading to a $0\%$ hit rate and a $2.2\times$ higher latency than the optimum due to constant CPU-GPU swapping.
Increasing $G_{\text{LRU}}$ to $4$ already reaches a $99.9\%$ hit rate, after which both latency and hit rate saturate while GPU memory keeps growing. From $G_{\text{LRU}}{=}7$ to $G_{\text{LRU}}{=}10$, latency improves by only $0.56$\,s ($16.91 \to 16.35$\,s), while memory inflates by over $3$\,GB ($21.4 \to 24.5$\,GB).
This rapid saturation is a direct consequence of our explicit retrieval design.
Since each query group selects only a small set of historical blocks, the working set of frequently accessed blocks is naturally compact, so a moderate LRU cache is sufficient to cover it.

\section{Analysis and Discussion}
\label{sec:analysis}

\para{Different heads prefer different memory branches}
Our three branches encode complementary aspects of the past KV cache, where compression provides a global summary, selection retrieves fine-grained long-range details, and the sliding window captures recent context.

As illustrated in \cref{fig:gate_ac_head}, different attention heads exhibit clearly distinct gating preferences. While some heads maintain a relatively balanced allocation across the three branches, others assign dominant weight to the SWA branch. This indicates that the fusion gates enable head-level specialization, adaptively routing each head toward its preferred type of historical information.

\para{Explicit retrieval design exhibits promising potential for zero-shot generalization}
As \methodAbbr~ selects the important information explicitly, each head keeps the selection rule unchanged and simply ranks among a larger pool of chunks as the video length grows.
As the video grows longer, the role of each head is preserved, with heads either biased toward recent frames or tracking the anchor tokens.
Therefore, \methodAbbr~ extrapolates to longer videos without any long video finetuning.
We demonstrate in \cref{fig:wo_finetune} that \methodAbbr~ maintains the consistency of long videos in this setting.



\begin{figure*}[t]
  \centering
  \includegraphics[width=\linewidth]{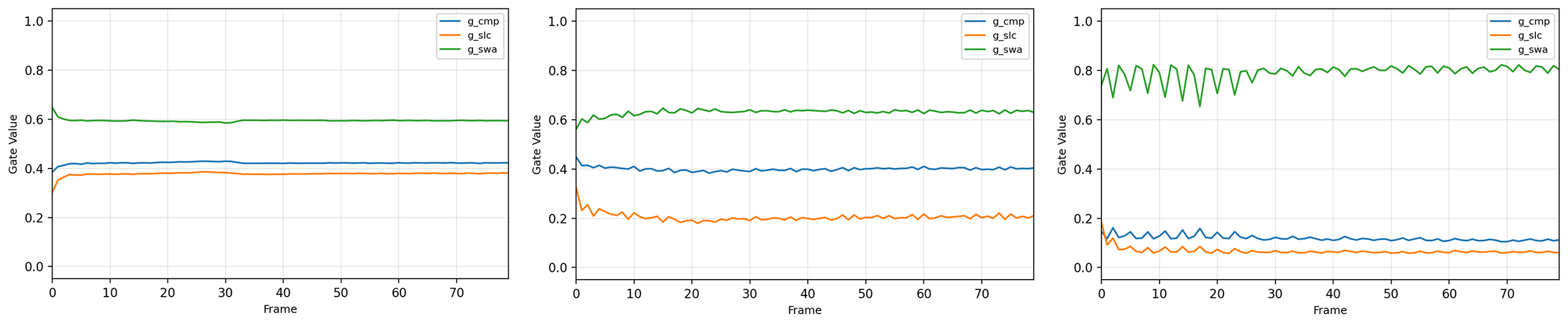}
\caption{
\textbf{Fusion gate weights across attention heads.} We visualize the gate values of the compression ($g_{\text{cmp}}$), selection ($g_{\text{slc}}$), and sliding-window ($g_{\text{swa}}$) branches across frames for three representative attention heads. 
}
\label{fig:gate_ac_head}
\end{figure*}

\begin{figure*}[t]
  \centering
  \includegraphics[width=\linewidth]{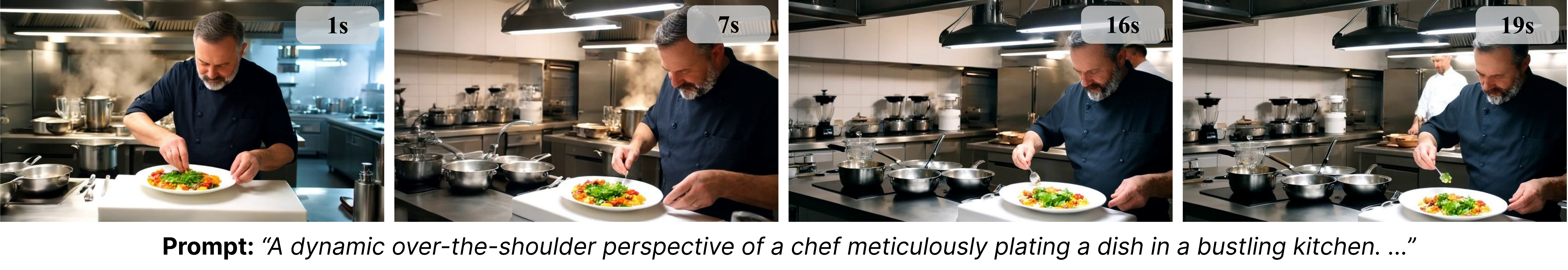}
\caption{
\textbf{Long-video generation by \methodAbbr{} without long-video finetuning.} A 20-second video generated by \methodAbbr{}, which is trained only on 5-second clips.
}
\label{fig:wo_finetune}
\end{figure*}

\section{Benchmark for Triton Utilization Different Pooling Size}
\begin{table}[h]
\centering
\caption{
\textbf{Best pooling block shapes for the Wan-2.1-1.3B\cite{wan2025wanopenadvancedlargescale} 480P configuration with latent shape $h=30,w=52$.}
The padded block size is rounded to the next power of two to match efficient kernel tiles.
Among all candidate shapes, we only use the best tile-fit group, which achieves 93.8\% utilization with low padding overhead.
}
\label{tab:bench_pooling_size}
\begin{tabular}{lcccccc}
\toprule
tile fit & $h$ & $w$ & block size & padded & util. & blocks \\
\midrule
\multirow{2}{*}{best}
& 15 & 2  & 30  & 32  & 93.8\% & 52 \\
& 15 & 4  & 60  & 64  & 93.8\% & 26 \\
\midrule
\multirow{3}{*}{good}
& 2  & 13 & 26  & 32  & 81.3\% & 60 \\
& 2  & 26 & 52  & 64  & 81.3\% & 30 \\
& 2  & 52 & 104 & 128 & 81.3\% & 15 \\
\midrule
\multirow{3}{*}{ok}
& 3  & 4  & 12  & 16  & 75.0\% & 130 \\
& 6  & 2  & 12  & 16  & 75.0\% & 130 \\
& 6  & 4  & 24  & 32  & 75.0\% & 65 \\
\midrule
\multirow{6}{*}{poor}
& 5  & 2  & 10  & 16  & 62.5\% & 156 \\
& 5  & 4  & 20  & 32  & 62.5\% & 78 \\
& 10 & 2  & 20  & 32  & 62.5\% & 78 \\
& 3  & 13 & 39  & 64  & 60.9\% & 40 \\
& 10 & 4  & 40  & 64  & 62.5\% & 39 \\
& 6  & 13 & 78  & 128 & 60.9\% & 20 \\
\midrule
\multirow{3}{*}{terrible}
& 3  & 2  & 6   & 16  & 37.5\% & 260 \\
& 5  & 13 & 65  & 128 & 50.8\% & 24 \\
& 10 & 13 & 130 & 256 & 50.8\% & 12 \\
\bottomrule
\end{tabular}
\end{table}

Our selection attention is implemented with Triton block-sparse kernels, which require block sizes to be powers of 2 for optimal GPU utilization. 
We ablate block size choices in \cref{tab:bench_pooling_size}. While smaller blocks (\eg, 12) provide finer-grained selection, they suffer from low utilization. The larger blocks (\eg, 60, 120) reduce sequence length more aggressively but may lose temporal/spatial granularity.


\section{Explicit Retrieval in Chunk-Based Video Generation}
\label{app:sec:gqa}

We highlight that our \methodAbbr~is specifically designed for chunk-based AR video generation, which significantly differs from sparse retrieval of full-resolution KV blocks \cite{yuan2025native} in LLMs. Specifically, in standard LLM decoding, each step involves a single query token, and modern architectures frequently employ Grouped Query Attention (GQA), wherein multiple query heads share a single KV head and can therefore share retrieval decisions with negligible overhead. By contrast, each AR step in video generation processes thousands of query tokens simultaneously, and prevalent video DiTs adopt Multi-Head Attention (MHA), where each query head maintains a dedicated KV head. These architectural differences fundamentally shift the computational bottleneck of sparse retrieval.

Therefore, in token-wise decoding, sparse retrieval primarily reduces the number of historical tokens attended to at each step. By contrast, in our chunk-based video generation,   sparse retrieval must additionally govern the union of selected blocks across all query tokens and attention heads, which is substantially more complex with unique challenges including local bias  and union explosion.
\methodAbbr~addresses these challenges to make explicit memory retrieval practical at both the algorithmic and system levels. Specifically, Adaptive Window Exclusion refines block selection by discounting recency bias in top-$K$ scoring; query sharing mitigates cross-query selection diversity by aggregating query tokens into representative proxies before retrieval; and Per-Head Scattered KV Access circumvents the need to expand head-specific selections into a large, head-aligned KV buffer, substantially reducing memory overhead.

\section{Permutation Equivariance of Reorder with RoPE}
\label{sec:reorder}
We justify the token reordering step in our compression pipeline.
Our compression operates on \emph{2D spatial neighborhoods}: tokens that are spatially adjacent in the original frame layout are pooled together so that each compressed block summarizes a semantically coherent region. However, when video tokens are flattened into a 1D sequence following the standard $f \times h \times w$ raster order, the tokens belonging to the same 2D spatial block are \emph{not} contiguous in memory, making block-wise pooling difficult to express as an efficient operation.

To address this, we introduce a one-time token reorder $\pi$ that places all tokens of each 2D spatial block into a contiguous range, so that compression becomes a simple contiguous reshape and mean reduction. 
The natural concern is whether this reorder with the corresponding RoPE  positions changes the attention output. 
We show below that reordering tokens and their RoPE positions \cite{su2024roformer} jointly leaves the attention output unchanged up to the same reordering, so the entire reorder–compress–attend–restore pipeline is mathematically equivalent to the unreordered computation.

\paragraph{Setup.}
Let $X \in \mathbb{R}^{N \times d}$ be a token sequence and $\Theta = (\theta_1, \ldots, \theta_N)$ the corresponding RoPE positional indices, where each $\theta_i$ encodes the spatiotemporal position $(f_i, h_i, w_i)$. RoPE-augmented attention is
\begin{equation}
    \mathrm{Attn}(X, \Theta) \;=\; \mathrm{softmax}\!\Big(\tfrac{Q'(K')^\top}{\sqrt{d}}\Big)\, V,
\end{equation}
where the per-token queries, keys, and values are
\begin{equation}
    Q'_i = R(\theta_i)\, W_Q X_i, \quad
    K'_j = R(\theta_j)\, W_K X_j, \quad
    V_j  = W_V X_j,
\end{equation}
and $R(\theta)$ is the RoPE rotation associated with position $\theta$. Let $\pi \in S_N$ be a permutation, and $P_\pi \in \{0,1\}^{N \times N}$ its permutation matrix, satisfying $(P_\pi X)_i = X_{\pi(i)}$ and $(P_\pi \Theta)_i = \theta_{\pi(i)}$.

\begin{proposition}[Joint reorder equivariance]\label{prop:rope-equiv}
For any permutation $\pi \in S_N$,
\begin{equation}
    \mathrm{Attn}(P_\pi X,\, P_\pi \Theta) \;=\; P_\pi \cdot \mathrm{Attn}(X, \Theta).
    \label{eq:equiv}
\end{equation}
\end{proposition}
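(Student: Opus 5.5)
The plan is to track how each intermediate object in the attention computation transforms under the joint reorder $(X,\Theta)\mapsto(P_\pi X, P_\pi\Theta)$. The argument is entirely index-level and needs no property of RoPE beyond the fact that $R(\theta_i)$ depends only on the position attached to token $i$.

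\emph{Step 1: projections.} Write $\tilde X = P_\pi X$ and $\tilde\Theta = P_\pi\Theta$. By definition $\tilde X_i = X_{\pi(i)}$ and $\tilde\theta_i = \theta_{\pi(i)}$. The reordered queries are $\tilde Q'_i = R(\tilde\theta_i) W_Q \tilde X_i = R(\theta_{\pi(i)}) W_Q X_{\pi(i)} = Q'_{\pi(i)}$. In matrix form this gives $\tilde Q' = P_\pi Q'$, and in the same way $\tilde K' = P_\pi K'$ and $\tilde V = P_\pi V$. This is the one place where \emph{jointly} permuting the positions matters. If $\Theta$ were left unpermuted, token $X_{\pi(i)}$ would be rotated by $R(\theta_i)$ and the identity would fail.

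\emph{Step 2: scores and softmax.} The logit matrix becomes $\tilde Q'(\tilde K')^\top/\sqrt d = P_\pi \bigl(Q'(K')^\top/\sqrt d\bigr) P_\pi^\top$. Its $(i,j)$ entry is the original $(\pi(i),\pi(j))$ entry, so the RoPE relative-position structure $\langle R(\theta_a)q, R(\theta_b)k\rangle$ is preserved automatically. Row-wise softmax commutes with simultaneous row and column permutation: permuting rows merely reorders which row is normalized, and permuting columns within a row leaves both the normalizer $\sum_j \exp(\cdot)$ and the multiset of entries unchanged. Hence $\mathrm{softmax}(P_\pi A P_\pi^\top) = P_\pi\,\mathrm{softmax}(A)\,P_\pi^\top$, which I would verify entrywise.

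\emph{Step 3: output.} Multiplying by the values gives $P_\pi\,\mathrm{softmax}(A)\,P_\pi^\top P_\pi V = P_\pi\,\mathrm{Attn}(X,\Theta)$, using orthogonality $P_\pi^\top P_\pi = I$. This yields \cref{eq:equiv}.

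The only step requiring care is Step 2's softmax commutation. Beyond that, any attention mask, such as the chunk-causal mask, must be permuted in the same way ($M\mapsto P_\pi M P_\pi^\top$). Since the causal mask is defined from the positions $(f_i,h_i,w_i)$ carried along in $\Theta$, I would note that it transforms consistently. The restore step of the pipeline then follows by applying $P_\pi^{-1}=P_\pi^\top$.
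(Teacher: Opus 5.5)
Your proposal is correct and follows the paper's proof step for step. You show $Q'^{(\pi)} = P_\pi Q'$ (and likewise for $K'$ and $V$) from the joint permutation of tokens and RoPE positions, conjugate the score matrix by $P_\pi$, commute the row-wise softmax past $P_\pi(\cdot)P_\pi^\top$, and cancel with $P_\pi^\top P_\pi = I$. Your entrywise softmax check, which correctly uses the column permutation as well as the row permutation, is slightly more careful than the paper's one-line appeal to row permutation; the causal-mask remark goes beyond the unmasked attention the proposition actually states.
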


\begin{proof}
The reordered query satisfies
\begin{equation}
    Q'^{(\pi)}_i \;=\; R\big((P_\pi\Theta)_i\big)\, W_Q\, (P_\pi X)_i \;=\; R(\theta_{\pi(i)})\, W_Q\, X_{\pi(i)} \;=\; Q'_{\pi(i)},
\end{equation}
so $Q'^{(\pi)} = P_\pi Q'$. The same holds for $K'$ and $V$, giving $K'^{(\pi)} = P_\pi K'$ and $V^{(\pi)} = P_\pi V$.
Using $(P_\pi K')^\top = (K')^\top P_\pi^\top$, the score matrix becomes
\begin{equation}
    Q'^{(\pi)} \big(K'^{(\pi)}\big)^\top
    \;=\; P_\pi\, Q' (K')^\top\, P_\pi^\top.
\end{equation}
Since row-wise softmax commutes with row permutation,
\begin{equation}
    \mathrm{softmax}\!\big(P_\pi M P_\pi^\top\big) \;=\; P_\pi\,\mathrm{softmax}(M)\, P_\pi^\top
\end{equation}
for any matrix $M$. Combining this with $V^{(\pi)} = P_\pi V$ and $P_\pi^\top P_\pi = I$,
\begin{align}
    \mathrm{Attn}(P_\pi X, P_\pi \Theta)
    &= \mathrm{softmax}\!\Big(\tfrac{P_\pi Q'(K')^\top P_\pi^\top}{\sqrt{d}}\Big)\, P_\pi V \\
    &= P_\pi\, \mathrm{softmax}\!\Big(\tfrac{Q'(K')^\top}{\sqrt{d}}\Big)\, P_\pi^\top P_\pi\, V \\
    &= P_\pi\, \mathrm{Attn}(X, \Theta). \qedhere
\end{align}
\end{proof}

\begin{corollary}[Reorder–compress equivalence]\label{cor:reorder-compress}
Let the reorder $\pi$ be chosen such that, for every 2D spatial compression block $b$, the indices $\{i : (h_i, w_i) \in b\}$ form a contiguous range under $\pi$. Then mean-pooling every $|b|$ consecutive tokens of $P_\pi K'$ and $P_\pi V$ yields the same compressed key/value sequence as applying 2D spatial block-pooling to the original $K'$ and $V$. Combined with Proposition~\ref{prop:rope-equiv}, the entire \emph{(reorder $\to$ contiguous compress $\to$ attention $\to$ inverse reorder)} pipeline is equivalent to \emph{(2D-block compress $\to$ attention)} on the unreordered sequence.
\end{corollary}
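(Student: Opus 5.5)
The argument splits in two. First I show that contiguous pooling of the reordered sequence gives the 2D-block-pooled sequence. Then I lift this to the whole pipeline using Proposition~\ref{prop:rope-equiv}.

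\emph{Step 1: pooling.} Fix a fixed ordering $b_1,\dots,b_M$ of the compression blocks; for definiteness, the order of their first occurrence in raster order. Choose $\pi$ so that positions $(m-1)|b|+1,\dots,m|b|$ of the reordered sequence are exactly the indices $\{i:(h_i,w_i,f_i)\in b_m\}$. Equal block sizes hold for the $15\times 2$ tiling of the $30\times 52$ latent, and the frame index belongs to the block definition. Since $(P_\pi K')_j = K'_{\pi(j)}$, the $m$-th contiguous mean is
\[
\frac{1}{|b|}\sum_{j=(m-1)|b|+1}^{m|b|} K'_{\pi(j)} \;=\; \frac{1}{|b_m|}\sum_{i\in b_m} K'_i .
\]
This holds because $\pi$ restricted to that range is a bijection onto $b_m$ and averaging is order-independent. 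The right-hand side is by definition the $m$-th 2D block-pooled key. The same computation applies to $V$. Writing $C$ for contiguous pooling and $D$ for 2D pooling, this gives $C P_\pi K' = D K'$ and $C P_\pi V = D V$.

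Pooling is applied to the RoPE-rotated keys $K'_i=R(\theta_i)W_KX_i$, not to raw tokens. So I must note that in the reordered sequence each token carries its own permuted position $(P_\pi\Theta)_j=\theta_{\pi(j)}$. Then $(P_\pi K')_j$ is literally the rotated key of token $\pi(j)$, which is exactly the joint reorder of Proposition~\ref{prop:rope-equiv}. This is the one place where reordering tokens without reordering positions would break the claim, and I will state it explicitly.

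\emph{Step 2: attention and restore.} The reordered queries are $P_\pi Q'$, again by the proposition's first computation. Compression attention is
\[
\mathrm{softmax}\bigl(P_\pi Q' (CP_\pi K')^\top/\sqrt d\bigr)\, C P_\pi V = \mathrm{softmax}\bigl(P_\pi Q'(DK')^\top/\sqrt d\bigr)\, DV .
\]
Row-wise softmax commutes with left row permutation, so this equals $P_\pi \,\mathrm{softmax}(Q'(DK')^\top/\sqrt d)\,DV$. Applying the inverse reorder $P_\pi^\top$ recovers the unreordered 2D-compressed attention output. No column permutation arises here, since the keys are already pooled to $M$ slots in the canonical block order. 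The obstacle is mainly bookkeeping rather than depth: the corollary's ``combined with Proposition'' glosses over the fact that the compressed keys are no longer an $N$-length permuted sequence. I would therefore reprove the row-permutation step directly, as above, instead of invoking the proposition verbatim.

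Finally, I would remark that the same argument applies to full (uncompressed) attention branches via the proposition itself, and to causal chunk masks provided $\pi$ acts within each chunk.
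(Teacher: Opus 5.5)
Your proposal is correct and follows the argument the paper intends: contiguity of each block's range under $\pi$ (with the frame index included in the block, and equal block sizes) makes each contiguous mean of $P_\pi K'$ and $P_\pi V$ equal the corresponding 2D block mean, and the softmax/row-permutation step from the proof of Proposition~\ref{prop:rope-equiv} then carries the reorder through attention, so the inverse reorder restores the unreordered output. Your remark is also accurate: the proposition cannot be cited verbatim for the compressed branch, because the pooled keys form $M$ block slots rather than a permuted $N$-length sequence, so only the query-row permutation remains; this is exactly the detail the paper's one-line ``combined with Proposition'' leaves implicit.
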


\begin{remark}
The corollary justifies our implementation: we apply \texttt{token\_reorder} once at the input and \texttt{token\_restore} once at the output. All transformer blocks (compression / selection / sliding-window attention) operate on reordered tokens internally, yielding strictly equivalent computation while reducing the spatial compression to a contiguous \texttt{reshape}+\texttt{mean} operation on GPU memory.
\end{remark}

\section{More Qualitative Results}
\paragraph{Long Video Generation.}
\cref{fig:app_long1} shows 60-second single-prompt results. SWA suffers from gradual subject drift as the kitten's appearance changes across frames, since distant context is discarded. LongLive~\cite{yang2025longlive} preserves identity via attention sinks but produces repetitive frames (red boxes). \methodAbbr~maintains both subject consistency and natural scene evolution.
\paragraph{Multi-Prompt Long Video Generation.}
\cref{fig:app_inter2,fig:app_inter3} show multi-prompt 60-second results. 
LongLive~\cite{yang2025longlive} and MemFlow~\cite{ji2025memflow} both exhibit repetition (red boxes) for the same underlying reason: neither can attach to the full-resolution historical context. LongLive uses fixed sink tokens, while MemFlow maintains a fixed memory bank, both losing the fully fine-grained details needed to follow prompt transitions. \methodAbbr~retrieves full-resolution historical context on demand, faithfully following prompt transitions while preserving consistency.

\label{sec:supp:more_qualitative}
\begin{figure*}[t]
  \centering
  \includegraphics[width=\linewidth]{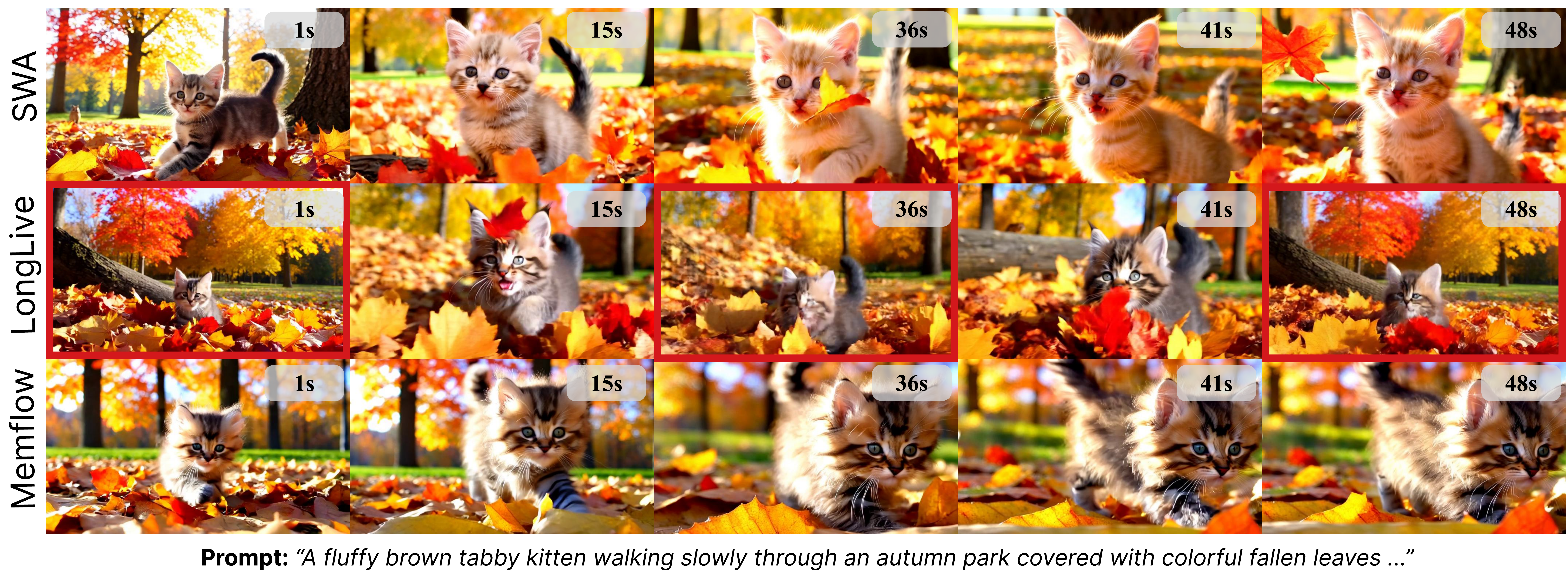}
\caption{
\textbf{Qualitative comparison on 60-second long video generation.} SWA denotes the sliding-window-only baseline. Red boxes highlight visible artifacts.}
\label{fig:app_long1}
\end{figure*}
\begin{figure*}[t]
  \centering
  \includegraphics[width=\linewidth]{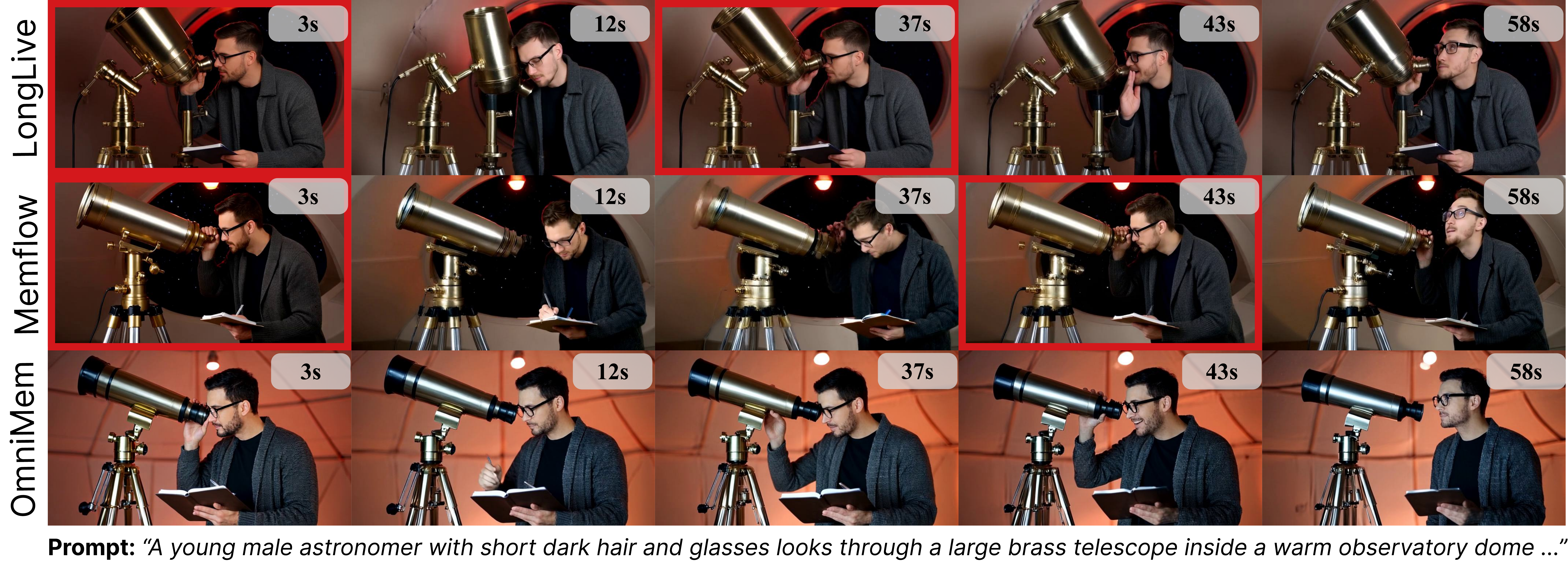}
\caption{
\textbf{Qualitative comparison on multi-prompt 60-second video generation.} Red boxes mark repetitive frames in LongLive \cite{yang2025longlive} and MemFlow \cite{ji2025memflow}.
}
\label{fig:app_inter3}
\end{figure*}
\begin{figure*}[t]
  \centering
  \includegraphics[width=\linewidth]{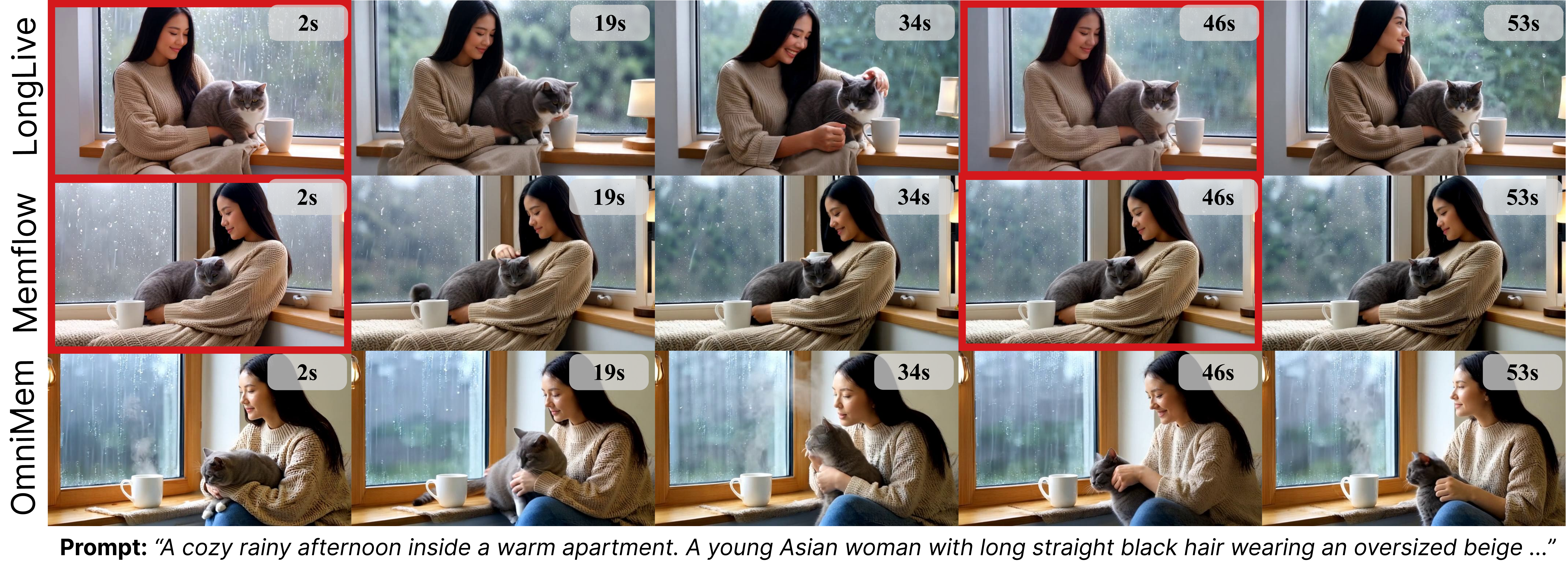}
\caption{
\textbf{Qualitative comparison on multi-prompt 60-second video generation.} Red boxes mark repetitive frames in LongLive \cite{yang2025longlive} and MemFlow \cite{ji2025memflow}.
}
\label{fig:app_inter2}
\end{figure*}

\clearpage



\end{document}